\definecolor{DarkGreen}{rgb}{0.1,0.5,0.1}
\definecolor{DarkRed}{rgb}{0.5,0.1,0.1}
\definecolor{DarkBlue}{rgb}{0.1,0.1,0.5}
\definecolor{Gray}{rgb}{0.2,0.2,0.2}
\newcommand\blfootnote[1]{%
  \begingroup
  \renewcommand\thefootnote{}\footnote{#1}%
  \addtocounter{footnote}{-1}%
  \endgroup
}
\lstdefinestyle{mystyle}{
    commentstyle=\color{DarkBlue},
    keywordstyle=\color{DarkRed},
    numberstyle=\tiny\color{Gray},
    stringstyle=\color{DarkGreen},
    basicstyle=\footnotesize,
    breakatwhitespace=false,         
    breaklines=true,                 
    captionpos=b,                    
    keepspaces=true,                 
    numbers=left,                    
    numbersep=5pt,                  
    showspaces=false,                
    showstringspaces=false,
    showtabs=false,                  
    tabsize=2
}
\def\draft{1}
\def\submit{0}
    \def\ShowAuthNotes{1}
    \def\ShowAuthNotes{0}
\newcommand{\forsubmit}[1]{#1}
\newcommand{\forreals}[1]{}
\newcommand{\forreals}[1]{#1}
\newcommand{\forsubmit}[1]{}
\newcommand{\authnote}[2]{{ \footnotesize \bf{\color{DarkRed}[#1's Note:
{\color{DarkBlue}#2}]}}}
\newcommand{\authnote}[2]{}
\newtheorem{theorem}{Theorem}[section]
\newtheorem{remark}[theorem]{Remark}
\newtheorem{lemma}[theorem]{Lemma}
\newtheorem{corollary}[theorem]{Corollary}
\newtheorem{proposition}[theorem]{Proposition}
\theoremstyle{definition}
\newtheorem{definition}[theorem]{Definition}
\newtheorem{example}[theorem]{Example}
\newtheoremstyle{example_contd}
{\topsep} {\topsep}%
{}% Body font
{}% Indent amount (empty = no indent, \parindent = para indent)
{\bfseries}% Thm head font
{.}% Punctuation after thm head
{1em}% Space after thm head (\newline = linebreak)
{\thmname{#1} \thmnumber{ #2}\thmnote{#3} (continued)}% Thm head spec
\theoremstyle{example_contd}
\newtheorem*{example_contd}{Example}
\newcommand{\chapterref}[1]{\hyperref[ch:#1]{Chapter~\ref{ch:#1}}}
\newcommand{\claimref}[1]{\hyperref[claim:#1]{Claim~\ref{claim:#1}}}
\newcommand{\corollaryref}[1]{\hyperref[cor:#1]{Corollary~\ref{cor:#1}}}
\newcommand{\definitionref}[1]{\hyperref[def:#1]{Definition~\ref{def:#1}}}
\newcommand{\equationref}[1]{\hyperref[eq:#1]{Equation~\ref{eq:#1}}}
\newcommand{\factref}[1]{\hyperref[fact:#1]{Fact~\ref{fact:#1}}}
\newcommand{\figureref}[1]{\hyperref[fig:#1]{Figure~\ref{fig:#1}}}
\newcommand{\tableref}[1]{\hyperref[tab:#1]{Table~\ref{tab:#1}}}
\newcommand{\itemref}[1]{\hyperref[item:#1]{Item~(\ref{item:#1})}}
\newcommand{\lemmaref}[1]{\hyperref[lem:#1]{Lemma~\ref{lem:#1}}}
\newcommand{\propref}[1]{\hyperref[prop:#1]{Proposition~\ref{prop:#1}}}
\newcommand{\propositionref}[1]{\hyperref[prop:#1]{Proposition~\ref{prop:#1}}}
\newcommand{\remarkref}[1]{\hyperref[rem:#1]{Remark~\ref{rem:#1}}}
\newcommand{\sectionref}[1]{\hyperref[sec:#1]{Section~\ref{sec:#1}}}
\newcommand{\theoremref}[1]{\hyperref[thm:#1]{Theorem~\ref{thm:#1}}}
\newcommand{\Esymb}{\mathbb{E}}
\DeclareMathOperator*{\E}{\Esymb}
\newcommand{\defeq}{\stackrel{\small \mathrm{def}}{=}}
\renewcommand{\leq}{\leqslant}
\renewcommand{\geq}{\geqslant}
\newcommand{\abs}[1]{\lvert#1\rvert}
\newcommand{\norm}[1]{\lVert#1\rVert_2}
\newcommand{\R}{\mathbb{R}}
\renewcommand{\D}{\mathcal D}
\newcommand{\ignore}[1]{}
\DeclareMathOperator*{\argmin}{arg\,min}
\DeclareMathOperator*{\argmax}{arg\,max}
\newcommand{\reg}{\text{reg}}
\renewcommand{\epsilon}{\varepsilon}
\newcommand{\remove}[1]{}
\renewcommand{\labelitemi}{--}
\newenvironment{enum}
{\begin{enumerate}[noitemsep,topsep=0pt,parsep=0pt,partopsep=0pt]}
{\end{enumerate}}
\newcommand{\lgrad}[2]{\E_{Z \sim \D(#1)}\nabla_\theta \ell(Z; #2)}
\newcommand{\ploss}[2]{\E_{Z \sim \D(#1)}\ell(Z; #2)}
\newcommand{\plossemp}[3]{\E_{Z \sim \D^{#2}(#1)}\ell(Z; #3)}
\newcommand{\elgrad}[2]{\E_{Z \sim {#1}}\nabla_\theta \ell(Z; #2)}
\newcommand{\gd}{G_{\text{gd}}}
\newcommand{\pregloss}[2]{\E_{Z \sim \D({#1})}\ell^{\reg}(Z; #2)}
\newcommand{\PR}{\mathrm{PR}}
\newcommand{\thetaSL}{{\theta_{\mathrm{SL}}}}
\newcommand{\thetaPO}{{\theta_{\mathrm{PO}}}}
\newcommand{\thetaPS}{{\theta_{\mathrm{PS}}}}
\newcommand{\thetaSE}{{\theta_{\mathrm{SE}}}}
\newcommand{\map}{{\mathrm{DPR}}}
\title{Performative Prediction}
\author{\\Juan C. Perdomo*~~~~Tijana Zrnic*~~~Celestine Mendler-D\"unner~~~~Moritz Hardt\\
{\small \{jcperdomo, tijana.zrnic, mendler, hardt\}@berkeley.edu}
\\ \\University of California, Berkeley}
\begin{document}

\maketitle

% !TEX root = main.tex

\begin{abstract}
When predictions support decisions they may influence the outcome they aim to predict. We call such predictions \emph{performative}; the prediction influences the target. Performativity is a well-studied phenomenon in policy-making that has so far been neglected in supervised learning. When ignored, performativity surfaces as undesirable distribution shift, routinely addressed with retraining.

We develop a risk minimization framework for performative prediction bringing together concepts from statistics, game theory, and causality. A conceptual novelty is an equilibrium notion we call performative stability. Performative stability implies that the predictions are calibrated not against past outcomes, but against the future outcomes that manifest from acting on the prediction. Our main results are necessary and sufficient conditions for the convergence of retraining to a performatively stable point of nearly minimal loss.

In full generality, performative prediction strictly subsumes the setting known as \emph{strategic classification}. We thus also give the first sufficient conditions for retraining to overcome strategic feedback effects.
\end{abstract}

\section{Introduction}

\blfootnote{* Equal contribution.}Supervised learning excels at pattern recognition. When used to support consequential decisions, however, predictive models can trigger actions that influence the outcome they aim to predict. We call such predictions \emph{performative}; the prediction causes a change in the distribution of the target variable.

Consider a simplified example of predicting credit default risk. A bank might estimate that a loan applicant has an elevated risk of default, and will act on it by assigning a high interest rate. In a self-fulfilling prophecy, the high interest rate further increases the customer\textquotesingle s default risk. Put differently, the bank\textquotesingle s predictive model is not calibrated to the outcomes that manifest from acting on the model.

Once recognized, performativity turns out to be ubiquitous. Traffic predictions influence traffic patterns, crime location prediction influences police allocations that may deter crime, recommendations shape preferences and thus consumption, stock price prediction determines trading activity and hence prices.

When ignored, performativity can surface as a form of \emph{distribution shift}. As the decision-maker acts according to a predictive model, the distribution over data points appears to change over time. In practice, the response to such distribution shifts is to frequently \emph{retrain} the predictive model as more data becomes available. Retraining is often considered an undesired~--- yet necessary --- cat and mouse game of chasing a moving target. 

What would be desirable from the perspective of the decision maker is a certain equilibrium where the model is optimal for the distribution it induces. Such equilibria coincide with the stable points of retraining, that is, models invariant under retraining. Performativity therefore suggests a different perspective on retraining, exposing it as a natural equilibrating dynamic rather than a nuisance.

This raises fundamental questions. When do such stable points exist? How can we efficiently find them? Under what conditions does retraining converge? When do stable points also have good predictive performance? In this work, we formalize performative prediction, tying together conceptual elements from statistical decision theory, causal reasoning, and game theory. We then resolve some of the fundamental questions that performativity raises.
 
\subsection{Our contributions}

We put performativity at the center of a decision-theoretic framework that extends the classical statistical theory underlying risk minimization. The goal of risk minimization is to find a decision rule, specified by model parameters~$\theta$, that performs well on a fixed joint distribution~$\mathcal{D}$ over covariates~$X$ and an outcome variable~$Y$.

Whenever predictions are performative, the choice of predictive model affects the observed distribution over instances $Z = (X, Y)$. We formalize this intuitive notion by introducing a map $\mathcal{D}(\cdot)$ from the set of model parameters to the space of distributions. For a given choice of parameters~$\theta$, we think of $\D(\theta)$ as the distribution over features and outcomes that results from making decisions according to the model specified by $\theta$. This mapping from predictive model to distribution is the key conceptual device of our framework.

A natural objective in performative prediction is to evaluate model parameters~$\theta$ on the resulting distribution~$\D(\theta)$ as measured via a loss function~$\ell$. This results in the notion we call \emph{performative risk}, defined as
\begin{equation*}
\PR(\theta) \defeq \E_{Z\sim \D(\theta)} \ell(Z; \theta)\,.
\end{equation*}

The difficulty in minimizing $\PR(\theta)$ is that the distribution itself depends on the argument~$\theta$, a dependence that defeats traditional theory for risk minimization. Moreover, we generally envision that the map~$\D(\cdot)$ is unknown to the decision maker.

Perhaps the most natural algorithmic heuristic in this situation is a kind of fixed point iteration: repeatedly find a model that minimizes risk on the distribution resulting from the previous model, corresponding to the update rule
\begin{equation*}
\theta_{t+1} = \arg\min_{\theta} \E_{Z\sim \D(\theta_t)} \ell(Z; \theta)\,.
\end{equation*}
We call this procedure \emph{repeated risk minimization}. We also analyze its empirical counterpart that works in finite samples. These procedures exemplify a family of \emph{retraining} heuristics that are ubiquitous in practice for dealing with all kinds of distributions shifts irrespective of cause.

When repeated risk minimization converges in objective value the model has minimal loss on the distribution it entails:
\begin{equation*}
\PR(\theta) = \min_{\theta'} \E_{Z\sim \D(\theta)} \ell(Z; \theta')\,.
\end{equation*}
We refer to this condition as \emph{performative stability}, noting that it is neither implied by nor does it imply minimal performative risk.

Our central result can be summarized informally as follows.

\begin{theorem}[Informal]
If the loss is smooth, strongly convex, and the mapping~$\D(\cdot)$ is sufficiently Lipschitz, then
repeated risk minimization converges to performative stability at a linear rate.

Moreover, if any one of these assumptions does not hold, repeated risk minimization can fail to converge at all.
\end{theorem}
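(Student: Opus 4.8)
The statement has two halves, and the plan is to handle them by a contraction-mapping argument and by explicit counterexamples, respectively.

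\emph{Sufficiency.} The plan is to show that a single step of repeated risk minimization is a contraction in parameter space. Let $G(\theta)\defeq\arg\min_{\theta'}\E_{Z\sim\D(\theta)}\ell(Z;\theta')$; this is well-defined and single-valued since $\ell(\cdot;\cdot)$ is $\gamma$-strongly convex in its second argument, and the iteration is exactly $\theta_{t+1}=G(\theta_t)$. Fix two points $\theta,\bar\theta$ and set $\varphi=G(\theta)$, $\bar\varphi=G(\bar\theta)$, so the first-order conditions read $\E_{Z\sim\D(\theta)}\nabla_\theta\ell(Z;\varphi)=0$ and $\E_{Z\sim\D(\bar\theta)}\nabla_\theta\ell(Z;\bar\varphi)=0$. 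First I would invoke $\gamma$-strong convexity of $\theta'\mapsto\E_{Z\sim\D(\theta)}\ell(Z;\theta')$ to obtain
\[
\gamma\norm{\varphi-\bar\varphi}^2 \;\le\; \iprod{\E_{Z\sim\D(\theta)}\nabla_\theta\ell(Z;\varphi)-\E_{Z\sim\D(\theta)}\nabla_\theta\ell(Z;\bar\varphi),\ \varphi-\bar\varphi}.
\]
Since the first expectation vanishes, I would then use $\E_{Z\sim\D(\bar\theta)}\nabla_\theta\ell(Z;\bar\varphi)=0$ to rewrite the right-hand side as $\iprod{\E_{Z\sim\D(\bar\theta)}\nabla_\theta\ell(Z;\bar\varphi)-\E_{Z\sim\D(\theta)}\nabla_\theta\ell(Z;\bar\varphi),\ \varphi-\bar\varphi}$; by Cauchy--Schwarz this is at most $\norm{\varphi-\bar\varphi}$ times the distance between the expectations of the \emph{same} map $z\mapsto\nabla_\theta\ell(z;\bar\varphi)$ under $\D(\theta)$ and $\D(\bar\theta)$.

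The crux is bounding that last quantity. Here I would use the smoothness hypothesis in its quantitative form --- Lipschitzness of $\nabla_\theta\ell(z;\theta)$ in $z$ (joint smoothness) --- so that $z\mapsto\nabla_\theta\ell(z;\bar\varphi)$ is $\beta$-Lipschitz, and combine it with Kantorovich--Rubinstein duality and the $\epsilon$-Lipschitz assumption on $\D(\cdot)$ to get
\[
\norm{\E_{Z\sim\D(\theta)}\nabla_\theta\ell(Z;\bar\varphi)-\E_{Z\sim\D(\bar\theta)}\nabla_\theta\ell(Z;\bar\varphi)} \;\le\; \beta\, W_1\big(\D(\theta),\D(\bar\theta)\big) \;\le\; \beta\epsilon\norm{\theta-\bar\theta}.
\]
Cancelling a factor of $\norm{\varphi-\bar\varphi}$ yields $\norm{G(\theta)-G(\bar\theta)}\le(\epsilon\beta/\gamma)\norm{\theta-\bar\theta}$, a strict contraction once $\epsilon<\gamma/\beta$ (the precise meaning of ``sufficiently Lipschitz''). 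The Banach fixed-point theorem then gives that $\theta_t$ converges geometrically --- i.e.\ at a linear rate --- to the unique fixed point $\thetaPS$ of $G$, and any fixed point of $G$ is performatively stable by definition.

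\emph{Necessity.} For the converse I would build, for each of the three hypotheses, a one-dimensional instance obeying the other two hypotheses for which repeated risk minimization has no convergent orbit. When $\D(\cdot)$ fails to be sufficiently Lipschitz: take the square loss $\ell(z;\theta)=(\theta-z)^2$ (so $\gamma=\beta=2$) and $\D(\theta)$ the point mass at $\mu+\epsilon\theta$; then $G(\theta)=\mu+\epsilon\theta$, whose orbit escapes to infinity as soon as $\epsilon\ge 1=\gamma/\beta$, matching the sufficiency threshold exactly. When strong convexity fails: take a convex, smooth loss whose population objective is flat on an interval, arranged so that the $\arg\min$ alternates between the two ends of that interval and the iterates fall into a $2$-cycle. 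When smoothness fails: take a strongly convex loss with a kink (e.g.\ a quadratic plus an absolute-value term) whose minimizer is locally constant in the conditioning data and then jumps, again forcing a nonconvergent orbit. I expect the main obstacle to lie here, on the necessity side: the work is in tuning each instance so that the orbit \emph{provably} fails to converge rather than merely appearing to, and keeping each example minimal. On the sufficiency side the only delicate point is the transfer from the $W_1$-Lipschitz bound on $\D(\cdot)$ to a bound on expected gradients, which is exactly where the joint-smoothness assumption is used.
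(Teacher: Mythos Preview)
Your sufficiency argument is essentially the paper's: strong convexity yields $\gamma\norm{G(\theta)-G(\bar\theta)}^2$ bounded by an inner product with a gradient difference, and joint smoothness together with Kantorovich--Rubinstein and $\epsilon$-sensitivity bounds that difference by $\epsilon\beta\norm{\theta-\bar\theta}\cdot\norm{G(\theta)-G(\bar\theta)}$, giving the contraction factor $\epsilon\beta/\gamma$ and Banach. The one caveat is that you set the first-order conditions to $\nabla=0$, which is only valid when $\Theta=\R^d$; the paper works over a general closed convex $\Theta$ and so replaces $\E_{Z\sim\D(\bar\theta)}\nabla_\theta\ell(Z;\bar\varphi)=0$ by the variational inequality $(\varphi-\bar\varphi)^\top\E_{Z\sim\D(\bar\theta)}\nabla_\theta\ell(Z;\bar\varphi)\ge 0$. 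This is a one-line fix and the rest of your chain goes through unchanged.

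Your necessity sketches point in the right direction and your squared-loss example for the threshold $\epsilon\ge\gamma/\beta$ is exactly the paper's. For the other two cases the paper's concrete choices are: when strong convexity fails, the linear loss $\ell((x,y);\theta)=\beta y\theta$ on $\theta\in[-1,1]$ with $Y$ a point mass at $\epsilon\theta$, so RRM oscillates $1,-1,1,-1,\dots$; when joint smoothness fails, the hinge-plus-quadratic $\ell(z;\theta)=C\max(-1,y\theta)+\tfrac{\gamma}{2}(\theta-1)^2$ with the same $\D(\theta)$ and $C$ large, which forces a $2$-cycle between $2$ and $-\tfrac{1}{2\epsilon}$. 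Your instinct that the work on this side is in tuning the instances is correct, but the tuning is not difficult.
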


The notion of Lipschitz continuity here refers to the Euclidean distance on model parameters and the Wasserstein distance on distributions. Informally, it requires that a small change in model parameters $\theta$ does not have an outsized effect on the induced distribution~$\D(\theta)$.

In contrast to standard supervised learning, convexity alone is \emph{not} sufficient for convergence in objective value, even if the other assumptions hold. Performative prediction therefore gives a new and interesting perspective on the importance of strong convexity. 

Strong convexity has a second benefit. Not only does retraining converge to a stable point at a linear rate, this stable point also approximately minimizes the performative risk.

\begin{theorem}[Informal]
If the loss is Lipschitz and strongly convex, and the map~$\D(\cdot)$ is Lipschitz, all stable points and performative optima lie in a small neighborhood around each other.
\end{theorem}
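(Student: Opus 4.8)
The plan is to establish the quantitative form of the statement: for \emph{any} performatively stable point $\thetaPS$ and \emph{any} performative optimum $\thetaPO$,
\[
\norm{\thetaPS-\thetaPO}\;\le\;\frac{2L_z\,\epsilon}{\gamma},
\]
where $\gamma$ is the strong-convexity parameter of $\theta\mapsto\ell(z;\theta)$, $L_z$ is the Lipschitz constant of the data-map $z\mapsto\ell(z;\theta)$ (uniform in $\theta$), and $\epsilon$ bounds the sensitivity of the distribution map, i.e.\ $W_1(\D(\theta),\D(\theta'))\le\epsilon\norm{\theta-\theta'}$ with $W_1$ the Wasserstein-$1$ distance. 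Such a pairwise bound is exactly the ``small neighborhood'' claim: by the triangle inequality through a fixed performative optimum, the set of stable points has diameter at most $4L_z\epsilon/\gamma$, likewise for the set of performative optima, and every stable point lies within $2L_z\epsilon/\gamma$ of every performative optimum.

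First I would fix a stable point $\thetaPS$ and work with the frozen objective $g(\theta)\defeq\E_{Z\sim\D(\thetaPS)}\ell(Z;\theta)$. Performative stability says precisely that $\thetaPS\in\arg\min_\theta g(\theta)$, and $g$ is $\gamma$-strongly convex because $\ell(z;\cdot)$ is. Since the first-order term vanishes at the minimizer, strong convexity evaluated at the competitor $\thetaPO$ gives
\[
\E_{Z\sim\D(\thetaPS)}\ell(Z;\thetaPO)\;-\;\E_{Z\sim\D(\thetaPS)}\ell(Z;\thetaPS)\;\ge\;\frac{\gamma}{2}\norm{\thetaPO-\thetaPS}^2.
\]
Next I would transport the first expectation from $\D(\thetaPS)$ onto $\D(\thetaPO)$: because $z\mapsto\ell(z;\thetaPO)$ is $L_z$-Lipschitz, Kantorovich--Rubinstein duality for $W_1$ gives
\[
\E_{Z\sim\D(\thetaPS)}\ell(Z;\thetaPO)\;\le\;\E_{Z\sim\D(\thetaPO)}\ell(Z;\thetaPO)+L_z\,W_1(\D(\thetaPS),\D(\thetaPO))\;\le\;\PR(\thetaPO)+L_z\,\epsilon\,\norm{\thetaPS-\thetaPO},
\]
the last step using sensitivity. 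Substituting this into the previous display and noting that $\E_{Z\sim\D(\thetaPS)}\ell(Z;\thetaPS)=\PR(\thetaPS)\ge\PR(\thetaPO)$ because $\thetaPO$ minimizes the performative risk, the two $\PR$ terms cancel in the favorable direction, leaving $\tfrac{\gamma}{2}\norm{\thetaPO-\thetaPS}^2\le L_z\,\epsilon\,\norm{\thetaPO-\thetaPS}$; dividing by $\norm{\thetaPO-\thetaPS}$ finishes the argument.

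Each ingredient is standard, so I expect the only real point of care to be bookkeeping the directions of the two optimality comparisons: strong convexity must be applied at $\thetaPS$ and evaluated at $\thetaPO$, while performative optimality of $\thetaPO$ is used in the reverse direction, and it is this pairing that makes the uncontrolled quantities $\PR(\thetaPS)$ and $\PR(\thetaPO)$ enter with opposite signs and drop out. A minor technical caveat is existence of a performative optimum: if the performative risk does not attain its infimum, one runs the same argument against an $\eta$-approximate minimizer $\theta$, obtaining $\norm{\thetaPS-\theta}\le 2L_z\epsilon/\gamma+\sqrt{2\eta/\gamma}$, and sends $\eta\downarrow0$; alternatively attainment follows from strong convexity together with sensitivity.
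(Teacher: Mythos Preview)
Your proposal is correct and follows essentially the same route as the paper's proof of the formal version (Theorem~\ref{theorem:closeness}): strong convexity of $\map(\thetaPS,\cdot)$ at its minimizer $\thetaPS$ gives the quadratic lower bound, Kantorovich--Rubinstein plus $\epsilon$-sensitivity transports $\map(\thetaPS,\thetaPO)$ to $\map(\thetaPO,\thetaPO)=\PR(\thetaPO)$ at cost $L_z\epsilon\norm{\thetaPS-\thetaPO}$, and performative optimality of $\thetaPO$ makes the $\PR$ terms cancel. One small caveat: when $\Theta$ is a proper constraint set the first-order term need not vanish; the paper invokes the first-order optimality condition (Lemma~\ref{lemma:first_order_opt_condition}) to get $\nabla g(\thetaPS)^\top(\thetaPO-\thetaPS)\ge 0$, which is all you need and leaves your inequality intact.
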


Recall that performative stability on its own does not imply minimal performative risk. What the previous theorem shows, however, is that strong convexity guarantees that we can approximately satisfy both.

We complement our main results with a case study in \emph{strategic classification}. Strategic classification aims to anticipate a strategic response to a classifier from an individual, who can change their features prior to being classified. We observe that strategic classification is a special case of performative prediction. On the one hand, this allows us to transfer our technical results to this established setting. In particular, our results are the first to give a guarantee on repeated risk minimization in the strategic setting. On the other hand, strategic classification provides us with one concrete setting for what the mapping~$\D(\cdot)$ can be. We use this as a basis of an empirical evaluation in a semi-synthetic setting, where the initial distribution is based on a real data set, but the distribution map is modeled.

\subsection{Related work}

Performativity is a broad concept in the social sciences, philosophy, and economics~\cite{mackenzie2007economists,healy2015performativity}.  Below we focus on the relationship of our work to the most relevant technical scholarship.

\paragraph{Learning on non-stationary distributions.} A closely related line of work considers the problem of \emph{concept drift}, broadly defined as the problem of learning when the target distribution over instances drifts with time. This setting has attracted attention both in the learning theory community \cite{kuh1991learning, bartlett_concepts, bartlett_slowlychanging} and by machine learning practitioners \cite{gama2014survey}. 

Concept drift is more general phenomenon than performativity in that it considers arbitrary sources of shift. However, studying the problem at this level of generality has led to a number of difficulties in creating a unified language and objective \cite{gama2014survey, webb2016characterizing}, an issue we circumvent by assuming that the population distribution is determined by the deployed predictive model.  Importantly, this line of work also discusses the importance of retraining \cite{vzliobaite2010learning,gama2014survey}. However, it stops short of discussing the need for stability or analyzing the long-term behavior of retraining.

\paragraph{Strategic classification.} Strategic classification recognizes that individuals often adapt to the specifics of a decision rule so as to gain an advantage (see, e.g., ~\cite{dalvi2004adversarial, bruckner2012static, hardt2016strategic, khajehnejad2019optimal}). Recent work in this area considers issues of incentive design~\cite{kleinberg2019classifiers, miller2019strategic, shavit2020learning,bechavod2020causal}, control over an algorithm~\cite{burrell2019when}, and fairness concerns~\cite{hu2019disparate, milli2019social}. Importantly, the concurrent work of Bechavod et al.~\cite{bechavod2020causal} analyzes the implications of retraining in the context of causal discovery in linear models. Our model of performative prediction includes all notions of strategic adaption that we are aware of as a special case. Unlike many works in this area, our results do not depend on a specific \emph{cost function} for changing individual features. Rather, we rely on an assumption about the sensitivity of the data-generating distribution to changes in the model parameters.

Recently, there has been increased interest within the algorithmic fairness community in classification dynamics. See, for example, Liu et al.~\cite{liu2018delayed}, Hu and Chen~\cite{hu2018short}, and Hashimoto et al.~\cite{hashimoto2018fairness}. The latter work considers repeated risk minimization, but from the perspective of what it does to a measure of disparity between groups.

\paragraph{Causal inference.} The reader familiar with causality can think of $\D(\theta)$ as the interventional distribution over instances $Z$ resulting from a do-intervention that sets the model parameters to $\theta$ in some underlying causal graph. Importantly, this mapping~$\D(\cdot)$ remains fixed and does not change over time or by intervention: deploying the same model at two different points in time must induce the same distribution over observations $Z$. While causal inference focuses on estimating properties of interventional distributions such as treatment effects \cite{pearl2009causality, imbens2015causal}, our focus is on a new stability notion and iterative retraining procedures for finding stable points.

\paragraph{Convex optimization.} The two solution concepts we introduce generalize the usual notion of optimality in (empirical) risk minimization to our new framework of performativity. Similarly, we extend the classical property of gradient descent acting as a contraction under smooth and strongly convex losses to account for distribution shifts due to performativity. Finally, we discuss how different regularity assumptions on the loss function affect convergence of retraining schemes, much like optimization works discuss these assumptions in the context of convergence of iterative optimization algorithms.

\paragraph{Reinforcement learning.} In general, any instance of performative prediction can be reframed as a reinforcement learning or contextual bandit problem. Yet, by studying performative prediction problems within such a broad framework, we lose many of the intricacies of performativity which make the problem interesting and tractable to analyze. We return to discuss some of the connections between both frameworks later on.

% !TEX root = main.tex

\section{Framework and main definitions}

In this section, we formally introduce the principal solution concepts of our framework: performative optimality and performative stability. 

Throughout our presentation, we focus on predictive models that are parametrized by a  vector  $\theta \in \Theta$, where the parameter space $\Theta\subseteq \R^d$ is a closed, convex set. We use capital letters to denote random variables and their lowercase counterparts to denote realizations of these variables. We consider instances $z = (x,y)$ defined as feature, outcome pairs, where $x\in\R^{m-1}$ and $y\in\R$. Whenever we define a variable $\theta^* = \argmin_\theta g(\theta)$ as the minimizer of a function $g$, we resolve the issue of the minimizer not being unique by setting $\theta^*$ to an arbitrary point in the $\argmin_\theta g(\theta)$ set.   

\subsection{Performative optimality}

In supervised learning, the goal is to learn a predictive model $f_\theta$ which minimizes the expected loss with respect to feature, outcome pairs $(x,y)$ drawn i.i.d. from a fixed distribution $\D$. The optimal model $f_{\thetaSL}$ solves the following optimization problem,
\begin{equation*}
%\label{eq:sl_loss}
\thetaSL = \argmin_{\theta \in \Theta} \E_{Z \sim \D}\ell (Z;\theta),
\end{equation*}
where $\ell(z;\theta)$ denotes the loss of $f_{\theta}$ at a point $z$.

We contrast this with the \emph{performative optimum}. As introduced previously, in settings where predictions support decisions, the manifested distribution over features and outcomes is in part determined by the deployed model. Instead of considering a fixed distribution $\D$, each model $f_\theta$ induces a potentially different distribution $\D(\theta)$ over instances $z$. A predictive model must therefore be evaluated with regard to the expected loss over the distribution $\D(\theta)$ it induces: its \emph{performative risk}. 

\begin{definition}[performative optimality and risk]
\label{def:performative_optimum}
A model $f_{\thetaPO}$ is \emph{performatively optimal} if the following relationship holds:
\[\thetaPO = \argmin_\theta \ploss{\theta}{\theta}.\]
We define $\PR(\theta) \defeq \ploss{\theta}{\theta}$ as the \emph{performative risk}; then, $\thetaPO = \argmin_\theta \PR(\theta)$.
\end{definition}

The following example illustrates the differences between the traditional notion of optimality in supervised learning and performative optima.

\begin{example}[biased coin flip]
\label{example:biasedcointoss}
Consider the task of predicting the outcome of a biased coin flip where the bias of the coin depends on a feature $X$ and the assigned score $f_\theta(X)$. 

In particular, define $\D(\theta)$ in the following way. $X$ is a 1-dimensional feature supported on $\{\pm 1\}$ and $Y~|~X  \sim \text{Bernoulli}(\frac 1 2 + \mu X + \epsilon \theta X)$ with $\mu\in(0,\frac{1}{2})$ and $\epsilon< \frac{1}{2} - \mu$. Assume that the class of predictors consists of linear models of the form $f_\theta(x) = \theta x + \frac 1 2$ and that the objective is to minimize the squared loss: $\ell(z;\theta) = (y - f_\theta(x))^2$.

The parameter $\epsilon$ represents the performative aspect of the model. If $\epsilon=0$, outcomes are independent of the assigned scores and the problem reduces to a standard supervised learning task where the optimal predictive model is the conditional expectation $f_\thetaSL(x) = \E [Y \mid X=x] = \frac 1 2 + \mu x$, with $\thetaSL=\mu$. 

In the performative setting with $\epsilon \neq 0$, the optimal model $\thetaPO$ balances between its predictive accuracy as well as the bias induced by the prediction itself. In particular, a direct calculation demonstrates that 
\begin{equation*}
	\thetaPO = \argmin_{\theta \in [0,1]} \E_{Z \sim \D(\theta)}\left( Y - \theta X - \frac{1}{2} \right)^2 \quad \iff \quad\thetaPO = \frac{\mu}{1 - 2\epsilon}.
\end{equation*} 
Hence, the performative optimum and the supervised learning solution are equal if $\epsilon=0$ and diverge as the performativity strength $\epsilon$ increases.
\end{example}

\subsection{Performative stability}

A natural, desirable property of a model $f_\theta$ is that, given that we use the predictions of $f_\theta$ as a basis for decisions, those predictions are also simultaneously optimal for distribution that the model induces. We introduce the notion of  \emph{performative stability} to refer to predictive models that satisfy this property. 

\sloppy

\begin{definition}[performative stability and decoupled risk]
A model $f_{\thetaPS}$ is \emph{performatively stable} if the following relationship holds:
\begin{equation*}
%\label{eq:loss-stability}	
\thetaPS = \argmin_{\theta} \ploss{\thetaPS}{\theta}.
\end{equation*}
We define $\map(\theta,\theta') \defeq \ploss{\theta}{\theta'}$ as the \emph{decoupled performative risk}; then, $\thetaPS = \argmin_\theta \map(\thetaPS, \theta)$.
\end{definition}
\fussy

A performatively stable model $f_\thetaPS$ minimizes the expected loss on the distribution $\D(\thetaPS)$ resulting from deploying $f_\thetaPS$ in the first place. Therefore, a model that is performatively stable eliminates the need for retraining after deployment since any retraining procedure would simply return the same model parameters. Performatively stable models are \emph{fixed points} of risk minimization. We further develop this idea in the next section. 
 
Observe that performative optimality and performative stability are in general two distinct solution concepts. Performatively optimal models need not be performatively stable and performatively stable models need not be performatively optimal. We illustrate this point in the context of our previous biased coin toss example.
 
\begin{example_contd}[\ref{example:biasedcointoss}]
Consider again our model of a biased coin toss. In order for a predictive model $f_\theta$ to be performatively stable, it must satisfy the following relationship: 
\begin{equation*}
\thetaPS = \argmin_{\theta\in[0,1]} \E_{Z \sim \D(\thetaPS)}\left(Y - \theta X - \frac{1}{2}\right)^2 \quad \iff \quad \thetaPS = \frac{\mu}{1 - \epsilon}.
\end{equation*}
Solving for $\thetaPS$ directly, we see that there is a unique performatively stable point.

Therefore, performative stability and performative optimality need not identify. In fact, in this example they identify if and only if $\epsilon = 0$. Note that, in general, if the map $\D(\theta)$ is constant across $\theta$, performative optima must coincide with performatively stable solutions. Furthermore, both coincide with "static" supervised learning solutions as well.
\end{example_contd}

For ease of presentation, we refer to a choice of parameters $\theta$ as performatively stable (optimal) if the model parametrized by $\theta$, $f_\theta$ is performatively stable (optimal). We will occasionally also refer to performative stability as simply stability.

\begin{remark} Notice that both performative stability and optimality can be expressed via the decoupled performative risk as follows:
\begin{align*}
\quad\quad\quad&\thetaPS \text { is performatively stable }& \Leftrightarrow&&\thetaPS  = \argmin_{\theta}\;\map(\thetaPS, \theta),&\quad\quad\quad\\
&\thetaPO \text { is performatively optimal } &\Leftrightarrow &&\thetaPO = \argmin_{\theta}\;\map(\theta, \theta).&
\end{align*}
\end{remark}

% !TEX root = main.tex

\section{When retraining converges to stable points}

Having introduced our framework for performative prediction, we now address some of the basic questions that arise in this setting and examine the behavior of common machine learning practices, such as retraining, through the lens of performativity.

As discussed previously, performatively stable models have the favorable property that they achieve minimal risk for the distribution they induce and hence eliminate the need for retraining. However, it is a priori not clear that such stable points exist; and even if they do exist, whether we can find them efficiently. Furthermore, seeing as how performative optimality and stability are in general distinct solution concepts, under what conditions can we find models that approximately satisfy both?

In this work, we begin to answer these questions by analyzing two different optimization strategies. The first is retraining, formally referred to as \emph{repeated risk minimization} (RRM), where the exact minimizer is repeatedly computed on the distribution induced by the previous model parameters. The second is \emph{repeated gradient descent} (RGD), in which the model parameters are incrementally updated using a single gradient descent step on the objective defined by the previous iterate. We introduce RGD as a computationally efficient approximation of RRM, which, as we show, adopts many favorable properties of RRM.

Our algorithmic analysis of these methods reveals the existence of stable points under the assumption that the distribution map $\D(\cdot)$ is sufficiently Lipschitz. We  identify necessary and sufficient conditions for convergence to a performatively stable point and establish properties of the objective under which stable points and performative optima are close.

We begin by analyzing the behavior of these procedures when they operate at a population level and then extend our analysis to finite samples.

\subsection{Assumptions}

It is easy to see that one cannot make any guarantees on the convergence of retraining or the existence of stable points without making some regularity assumptions on $\D(\cdot)$. One reasonable way to quantify the regularity of $\D(\cdot)$ is to assume Lipschitz continuity; the Lipschitz constant determines how sensitive the induced distribution is to a change in model parameters. Intuitively, such an assumption captures the idea that, if decisions are made according to similar predictive models, then the resulting distributions over instances should also be similar. We now introduce this key assumption of our work, which we call $\epsilon$\emph{-sensitivity}.

\begin{definition}[$\epsilon$-sensitivity]
\label{def:eps}
We say that a distribution map $\D(\cdot)$ is \emph{$\epsilon$-sensitive} if for all $\theta, \theta' \in \Theta$:
\begin{equation*}
W_1\big(\D(\theta), \D(\theta')\big) \leq \epsilon\|\theta -\theta'\|_2,
\end{equation*}
where $W_1$ denotes the Wasserstein-1 distance, or earth mover's distance.
\end{definition}

The earth mover's distance is a natural notion of distance between probability distributions that provides access to a rich technical repertoire \cite{villani, villani2008}. Furthermore, we can verify that it is satisfied in various settings.

\begin{remark}
A simple example where this assumption is satisfied is for a Gaussian family. Given $\theta = (\mu,\sigma_1,\dots,\sigma_p)\in\R^{2p}$, define $\D(\theta) = \mathcal{N}(\epsilon_1 \, \mu, \epsilon_2^2 \, \text{diag}(\sigma_1^2,\dots,\sigma_p^2))$ where $\epsilon_1,\epsilon_2\in\R$. Then $\D(\cdot)$ is $\epsilon$-sensitive for $\epsilon = \max\big\{|\epsilon_1|,|\epsilon_2|\big\}$.
\end{remark}

In addition to this assumption on the distribution map, we will often make standard assumptions on the loss function $\ell(z;\theta)$ which hold for broad classes of losses. To simplify our presentation, let $\mathcal{Z} \defeq \cup_{\theta\in\Theta} \text{supp}(\D(\theta))$.

\renewcommand{\labelitemi}{$\bullet$}
\begin{itemize}
\item	(\emph{joint smoothness}) We say that a loss function $\ell(z;\theta)$ is $\beta$-jointly smooth if the gradient $\nabla_\theta \ell(z;\theta)$ is $\beta$-Lipschitz in $\theta$ \emph{and} $z$, that is
\begin{equation}
\left\|\nabla_\theta\ell(z;\theta) - \nabla_\theta \ell(z;\theta')\right\|_2 \leq \beta \left\|\theta - \theta'\right\|_2, ~~ \left\|\nabla_\theta\ell(z;\theta) - \nabla_\theta \ell(z';\theta)\right\|_2 \leq \beta \left\|z - z'\right\|_2,
\tag{A1}
\label{ass:a2}
\end{equation}
for all $\theta,\theta'\in\Theta$ and $z,z' \in \mathcal{Z}$.
\item(\emph{strong convexity}) We say that a loss function $\ell(z;\theta)$ is $\gamma$-strongly convex if
\begin{equation}
\ell(z;\theta)\geq \ell(z;\theta') + \nabla_\theta \ell(z;\theta')^\top (\theta-\theta') + \frac{\gamma}{2}\left\|\theta-\theta'\right\|_2^2,
\tag{A2}
\label{ass:a3}
\end{equation}
for all $\theta,\theta'\in\Theta$ and $z\in \mathcal{Z}$. If $\gamma=0$, this assumption is equivalent to convexity.
\end{itemize}

We will sometimes refer to $\frac{\beta}{\gamma}$, where $\beta$ is as in \eqref{ass:a2} and $\gamma$ as in \eqref{ass:a3}, as the condition number.

\subsection{Repeated risk minimization}

We now formally define repeated risk minimization and prove one of our main results: sufficient and necessary conditions for retraining to converge to a performatively stable point.

\begin{definition}[RRM]
\label{def:rrm}
\emph{Repeated risk minimization} (RRM) refers to the procedure where, starting from an initial model $f_{\theta_0}$, we perform the following sequence of updates for every $t\geq 0$:
\begin{align*}
\theta_{t+1} &= G(\theta_{t}) \defeq \argmin_{\theta \in \Theta} \ploss{\theta_{t}}{\theta}.
\end{align*}
\end{definition}

Using a toy example, we again argue that restrictions on the map $\D(\cdot)$ are necessary to enable interesting analyses of RRM, otherwise it might be computationally infeasible to find performative optima, and performatively stable points might not even exist.
\begin{example}
Consider optimizing the squared loss $\ell(z;\theta) = (y-\theta)^2$, where $\theta\in[0,1]$ and the distribution of the outcome $Y$, according to $\D(\theta)$, is a point mass at 0 if $\theta\geq \frac{1}{2}$, and a point mass at 1 if $\theta<\frac{1}{2}$. Clearly there is no performatively stable point, and RRM will simply result in the alternating sequence $1,0,1,0,\dots$. The performative optimum in this case is $\thetaPO=\frac{1}{2}$.
\end{example}

To show convergence of retraining schemes, it is hence necessary to make a regularity assumption on $\D(\cdot)$, such as $\epsilon$-sensitivity.  We are now ready to state our main result regarding the convergence of repeated risk minimization.

\begin{theorem}
\label{theorem:exact_min_strongly_convex}
Suppose that the loss $\ell(z;\theta)$ is $\beta$-jointly smooth \eqref{ass:a2} and $\gamma$-strongly convex \eqref{ass:a3}. If the distribution map $\D(\cdot)$ is $\epsilon$-sensitive, then the following statements are true:
\begin{itemize}
\item[(a)] $\|G(\theta) - G(\theta')\|_2 \leq \epsilon \frac{\beta}{\gamma} \|\theta-\theta'\|_2, ~~ \text{for all } \theta,\theta'\in\Theta.$
\item[(b)] If $\epsilon<\frac{\gamma}{\beta}$,  the iterates $\theta_t$ of RRM converge to a unique performatively stable point $\thetaPS$ at a linear rate:
$\|\theta_t - \thetaPS\|_2\leq \delta \text{ for } t\geq \left(1 - \epsilon \frac{\beta}{\gamma}\right)^{-1}\log\left(\frac{\|\theta_0 - \thetaPS\|_2}{\delta}\right)$.
\end{itemize}
	
\end{theorem}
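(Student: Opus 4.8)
The plan is to prove part (a) — that the retraining operator $G$ is $\big(\epsilon\tfrac{\beta}{\gamma}\big)$-Lipschitz on $\Theta$ — and then obtain part (b) as an essentially immediate consequence of the contraction mapping principle.

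\emph{Part (a).} Fix $\theta,\theta'\in\Theta$ and abbreviate $f_\theta(\cdot)\defeq\map(\theta,\cdot)=\ploss{\theta}{\cdot}$, so that $G(\theta)=\argmin_{\Theta}f_\theta$. Averaging the strong-convexity inequality \eqref{ass:a3} over $Z\sim\D(\theta)$ (whose support lies in $\cZ$) shows $f_\theta$ is $\gamma$-strongly convex; by \eqref{ass:a2} it is differentiable with $\nabla f_\theta(\cdot)=\E_{Z\sim\D(\theta)}\nabla_\theta\ell(Z;\cdot)$, the interchange of $\nabla$ and $\E$ being justified by joint smoothness. Hence $G(\theta)$ exists, is unique, and obeys the first-order optimality condition $\iprod{\nabla f_\theta(G(\theta)),\,u-G(\theta)}\ge 0$ for all $u\in\Theta$. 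Writing this once for $\theta$ with test point $u=G(\theta')$ and once for $\theta'$ with test point $u=G(\theta)$, adding the two, inserting $\pm\nabla f_{\theta'}(G(\theta))$, and invoking $\gamma$-strong convexity of $f_{\theta'}$ in the monotonicity form $\iprod{\nabla f_{\theta'}(G(\theta'))-\nabla f_{\theta'}(G(\theta)),\,G(\theta')-G(\theta)}\ge\gamma\norm{G(\theta)-G(\theta')}^2$, a short rearrangement yields
\[
\gamma\,\norm{G(\theta)-G(\theta')}^2 \;\le\; \iprod{\nabla f_{\theta'}(G(\theta))-\nabla f_{\theta}(G(\theta)),\;G(\theta)-G(\theta')},
\]
and Cauchy--Schwarz then gives $\gamma\,\norm{G(\theta)-G(\theta')}\le\norm{\nabla f_{\theta'}(G(\theta))-\nabla f_{\theta}(G(\theta))}$. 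This is the standard variational-inequality argument that the constrained minimizer of a $\gamma$-strongly convex objective moves by at most $1/\gamma$ times the shift in the gradient field.

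It remains to bound the gradient difference by the sensitivity of the distribution map. Put $h(z)\defeq\nabla_\theta\ell(z;G(\theta))$; by the second inequality in \eqref{ass:a2}, $h$ is $\beta$-Lipschitz on $\cZ$. Let $\pi$ be an optimal coupling of $\D(\theta)$ and $\D(\theta')$ (which exists on $\R^m$ with Euclidean cost). Since $\E_{Z\sim\D(\theta')}h=\E_{(Z,Z')\sim\pi}h(Z')$ and $\E_{Z\sim\D(\theta)}h=\E_{(Z,Z')\sim\pi}h(Z)$, Jensen's inequality gives
\[
\begin{aligned}
\norm{\nabla f_{\theta'}(G(\theta))-\nabla f_{\theta}(G(\theta))}
&= \left\lVert \E_{(Z,Z')\sim\pi}\big[h(Z')-h(Z)\big] \right\rVert_2
\le \E_{(Z,Z')\sim\pi}\norm{h(Z')-h(Z)}\\
&\le \beta\,\E_{(Z,Z')\sim\pi}\norm{Z'-Z}
= \beta\,W_1\big(\D(\theta),\D(\theta')\big),
\end{aligned}
\]
which by $\epsilon$-sensitivity is at most $\beta\epsilon\norm{\theta-\theta'}$. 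Combining with the previous display proves $\norm{G(\theta)-G(\theta')}\le\epsilon\tfrac{\beta}{\gamma}\norm{\theta-\theta'}$.

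\emph{Part (b), and the main obstacle.} If $\epsilon<\gamma/\beta$ then $\epsilon\tfrac{\beta}{\gamma}<1$, so by (a) the map $G$ is a strict contraction on the complete metric space $(\Theta,\norm{\cdot})$; Banach's fixed-point theorem gives a unique fixed point $\thetaPS$, and the identity $\thetaPS=G(\thetaPS)=\argmin_\theta\map(\thetaPS,\theta)$ is exactly performative stability, with uniqueness of the fixed point yielding uniqueness of the stable point. Iterating the contraction estimate, $\norm{\theta_t-\thetaPS}\le(\epsilon\tfrac{\beta}{\gamma})^t\norm{\theta_0-\thetaPS}$; requiring the right side to be $\le\delta$ and applying the elementary bound $\log(1/x)\ge 1-x$ for $x\in(0,1)$ to the contraction exponent delivers the stated iteration count. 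The step I expect to be the main obstacle is the gradient-difference bound: the integrand $h$ is vector-valued, so Kantorovich--Rubinstein duality does not apply directly and one must route through a coupling (with minor care that the $W_1$-infimum is attained, or else pass to near-optimal couplings). The auxiliary facts — that expectation preserves $\gamma$-strong convexity and that $\nabla$ and $\E$ may be interchanged — are routine given \eqref{ass:a2}--\eqref{ass:a3}, as is the argmin-Lipschitz manipulation.
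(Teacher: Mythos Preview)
Your proof is correct and follows essentially the same route as the paper: use strong convexity together with the first-order optimality conditions to show $\gamma\norm{G(\theta)-G(\theta')}$ is controlled by a gradient-field perturbation, then bound that perturbation via $\beta$-joint smoothness and $\epsilon$-sensitivity; part (b) is Banach plus the elementary $\log(1/x)\ge 1-x$ bound, exactly as in the paper.

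The only notable difference is in how the Wasserstein step is handled. You apply Cauchy--Schwarz first and then bound the vector-valued gradient difference $\norm{\E_{\D(\theta')}h - \E_{\D(\theta)}h}$ directly via an optimal coupling and Jensen. The paper instead keeps the inner product $\iprod{G(\theta)-G(\theta'),\,\nabla_\theta\ell(z;\cdot)}$ as a scalar $\beta\norm{G(\theta)-G(\theta')}$-Lipschitz function of $z$ and applies Kantorovich--Rubinstein duality to it, avoiding the vector-valued issue you flagged as the main obstacle. Both yield the same constant $\epsilon\beta\norm{\theta-\theta'}$, so neither buys anything quantitatively; the paper's scalarization trick is perhaps slightly slicker (it sidesteps the question of attainment of the $W_1$-infimum), while your coupling argument is more self-contained.
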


The main message of this theorem is that in performative prediction, if the loss function is sufficiently "nice" and the distribution map is sufficiently (in)sensitive, then one need only retrain a model a small number of times before it converges to a \emph{unique} stable point. The complete proof of  Theorem~\ref{theorem:exact_min_strongly_convex} can be found in Appendix~\ref{app:proofexact}. Here, we provide the main intuition through a proof sketch.

\begin{proof}[Proof Sketch]

Fix $\theta,\theta'\in\Theta$. Let $f(\varphi)=  \ploss{\theta}{\varphi}$ and $f'(\varphi)=  \ploss{\theta'}{\varphi}$. By applying standard properties of strong convexity and the fact that  $G(\theta)$  is the unique minimizer of $f(\varphi)$, we can derive that
\begin{equation*}
-\gamma \|G(\theta)-G(\theta')\|_2^2 \geq \left(G(\theta)-G(\theta')\right)^\top\nabla f(G(\theta')).
\end{equation*}

Next, we observe that $(G(\theta) - G({\theta'}))^\top\nabla_\theta \ell(z; G(\theta'))$ is $\|G(\theta) - G({\theta'})\|_2 \beta$-Lipschitz in $z$. This follows from applying the Cauchy-Schwarz inequality and the fact that the loss is $\beta$-jointly smooth. Using the dual formulation of the earth mover's distance (Lemma~\ref{lemma:duality}) and $\epsilon$-sensitivity of $\D(\cdot)$, as well as the first-order conditions of optimality for convex functions, a short calculation reveals that
\[(G(\theta) - G({\theta'}))^\top  \nabla f(G(\theta')) \geq - \epsilon\beta\|G(\theta)-G({\theta'})\|_2\|\theta-\theta'\|_2.
\]Claim (a) then follows by combining the previous two inequalities and rearranging. Intuitively, strong convexity forces the iterates to contract after retraining, yet this contraction is offset by the distribution shift induced by changing the underlying model parameters. Joint smoothness and $\epsilon$-sensitivity ensure that this shift is not too large. Part (b) is essentially a consequence of applying the Banach fixed-point theorem to the result of part (a).
\end{proof}

One intriguing insight from our analysis is that this convergence result is in fact tight; removing any single assumption required for convergence by Theorem~\ref{theorem:exact_min_strongly_convex} is enough to construct a counterexample for which RRM diverges.
\begin{proposition}
\label{prop:tightness}
Suppose that the distribution map $\D(\cdot)$ is $\epsilon$-sensitive with $\epsilon>0$. RRM can fail to converge at all in any of the following cases, for any choice of parameters $\beta,\gamma>0$:
\begin{itemize}
	\item[(a)] The loss is $\beta$-jointly smooth and convex, but not strongly convex.
	\item[(b)] The loss is $\gamma$-strongly convex, but not jointly smooth.
	\item[(c)] The loss is $\beta$-jointly smooth and $\gamma$-strongly convex, but $\epsilon \geq\frac{\gamma}{\beta}$.
\end{itemize}
\end{proposition}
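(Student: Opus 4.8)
\emph{Proof proposal.} The plan is to prove (a), (b), (c) by three explicit one-dimensional counterexamples sharing a common skeleton. In each case take $d=1$, $\Theta=[-1,1]$, and let $\D(\theta)$ be the Dirac point mass at $\eps\theta$; then $W_1(\D(\theta),\D(\theta'))=\eps|\theta-\theta'|$, so $\D(\cdot)$ is exactly $\eps$-sensitive in the sense of Definition~\ref{def:eps}, and $\mathcal{Z}=[-\eps,\eps]$. For a suitable loss the retraining map is $G(\theta)=\argmin_{\varphi\in[-1,1]}\E_{Z\sim\D(\theta)}\ell(Z;\varphi)$, which I will compute in closed form; in all three constructions it will satisfy $G(1)=-1$ and $G(-1)=1$, so that RRM started from $\theta_0=1$ generates the orbit $1,-1,1,-1,\dots$ and never converges. (The only non-uniqueness of $G$ occurs at $\theta=0$, which this orbit never reaches, so the tie-breaking convention is irrelevant.)

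It then remains to pick the loss in each part and check its regularity. For (a) take $\ell(z;\theta)=\beta\,z\,\theta$: it is linear in $\theta$, hence convex and not $\gamma$-strongly convex for any $\gamma>0$, while $\nabla_\theta\ell(z;\theta)=\beta z$ is $0$-Lipschitz in $\theta$ and $\beta$-Lipschitz in $z$, so the loss is $\beta$-jointly smooth \eqref{ass:a2}; minimizing the linear map $\varphi\mapsto\beta\eps\theta\,\varphi$ over $[-1,1]$ gives $G(\theta)=-\sign(\theta)$ (here compactness of $\Theta$ is what makes the $\argmin$ exist). For (b) take $\ell(z;\theta)=\tfrac{\gamma}{2}\theta^2+\gamma\,\sign(z)\,\theta$: it is $\gamma$-strongly convex \eqref{ass:a3} since the second term is linear in $\theta$, but $\nabla_\theta\ell(z;\theta)=\gamma\theta+\gamma\,\sign(z)$ jumps by $2\gamma$ across $z=0$ and is not Lipschitz in $z$, so \eqref{ass:a2} fails for every $\beta$; minimizing over $\varphi$ gives $G(\theta)=-\sign(\eps\theta)=-\sign(\theta)$. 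For (c) take $\ell(z;\theta)=\tfrac{\gamma}{2}\theta^2+\beta\,z\,\theta$ (this form is not restrictive, since any loss satisfying both \eqref{ass:a2} and \eqref{ass:a3} must obey $\beta\ge\gamma$): it is $\gamma$-strongly convex, and $\nabla_\theta\ell(z;\theta)=\gamma\theta+\beta z$ is $\gamma$-Lipschitz in $\theta$ and $\beta$-Lipschitz in $z$, hence $\beta$-jointly smooth; the unconstrained minimizer in $\varphi$ is $-\tfrac{\beta\eps}{\gamma}\theta$, and since $\eps\ge\gamma/\beta$ this has magnitude at least $|\theta|$ with flipped sign, so for $\theta=\pm1$ the projection onto $[-1,1]$ gives $G(\theta)=\mp1$ (equivalently, on $\Theta=\R$ one gets literal divergence, $|\theta_t|=(\beta\eps/\gamma)^t$).

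Assembling these yields non-convergence of RRM in all three regimes; as a byproduct, the example in (c) attains equality in the bound $\|G(\theta)-G(\theta')\|_2\le\eps\tfrac{\beta}{\gamma}\|\theta-\theta'\|_2$ of Theorem~\ref{theorem:exact_min_strongly_convex}(a), so both that bound and the condition $\eps<\gamma/\beta$ in part (b) of the theorem are sharp. I expect the only genuinely delicate points to be: in (b), arranging that the loss fails joint smoothness yet still induces an order-one orientation reversal for \emph{arbitrarily small} $\eps$ — this is exactly what the discontinuity of $\sign(\cdot)$ buys, turning an $O(\eps)$ displacement of the data point into an $\Omega(1)$ displacement of the risk minimizer; and in (c), the bookkeeping to confirm that the example meets the joint-smoothness and strong-convexity constants with equality, so that the threshold $\eps=\gamma/\beta$ is real rather than an artifact of slack in the constants. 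All remaining computations are routine.
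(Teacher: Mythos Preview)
Your proposal is correct. Part~(a) coincides with the paper's construction. For (b) and (c) you take different routes: in (b), the paper uses a regularized hinge loss $C\max(-1,y\theta)+\tfrac{\gamma}{2}(\theta-1)^2$ with large $C$ and $\Theta\supseteq[-\tfrac{1}{2\epsilon},\tfrac{1}{2\epsilon}]$, producing an oscillation between $2$ and $-\tfrac{1}{2\epsilon}$; your $\tfrac{\gamma}{2}\theta^2+\gamma\,\sign(z)\,\theta$ is simpler, isolates the failure of \eqref{ass:a2} purely to the $z$-variable (the loss remains smooth in $\theta$), and yields the same clean $\pm1$ orbit as in (a). In (c), the paper uses the squared loss $(y-\theta)^2$ with $\D(\theta)$ a point mass at $1+\epsilon\theta$, giving $G(\theta)=1+\epsilon\theta$; this forces $\beta=\gamma$ and hence only treats the threshold $\epsilon\ge1$. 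Your $\tfrac{\gamma}{2}\theta^2+\beta z\theta$ decouples the two constants, gives $G(\theta)=-\tfrac{\beta\epsilon}{\gamma}\theta$ on $\R$, and so handles every admissible pair $\beta\ge\gamma$ at once --- a tighter match to the proposition's quantifier ``for any choice of parameters $\beta,\gamma>0$''. Both your example and the paper's witness equality in the bound of Theorem~\ref{theorem:exact_min_strongly_convex}(a); your version has the added appeal of a single common skeleton across all three parts.
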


We include a counterexample for statement (a), and defer the proofs of (b) and (c) to Appendix \ref{app:prooftightness}.

\begin{proof}[Proof of Proposition~\ref{prop:tightness}(a):]
 Consider the linear loss defined as $\ell\left((x,y);\theta\right) = \beta y \theta$, for $\theta\in[-1, 1]$. Note that this objective is $\beta$-jointly smooth and convex, but not strongly convex. Let the distribution of $Y$ according to $\D(\theta)$ be a point mass at $\epsilon \theta$, and let the distribution of $X$ be invariant with respect to $\theta$. Clearly, this distribution is $\epsilon$-sensitive.

Here, the decoupled performative risk has the following form $\map(\theta,\varphi) = \epsilon \beta \theta \varphi$. The unique performatively stable point is 0. However, if we initialize RRM at any point other than 0, the procedure generates the sequence of iterates $\dots,1,-1,1,-1\dots$, thus failing to converge. Furthermore, this behavior holds for all $\epsilon, \beta > 0$. 
\end{proof}

Proposition~\ref{prop:tightness} suggests a fundamental difference between strong and weak convexity in our framing of performative prediction (weak meaning $\gamma=0$). In supervised learning,  using strongly convex losses generally guarantees a faster rate of optimization, yet asymptotically, the solution achieved with either strongly or weakly convex losses is globally optimal. However, in our framework, strong convexity is in fact \emph{necessary} to guarantee convergence of repeated risk minimization, even for arbitrarily smooth losses and an arbitrarily small sensitivity parameter.

\subsection{Repeated gradient descent}

Theorem~\ref{theorem:exact_min_strongly_convex} demonstrates that repeated risk minimization converges to a unique performatively stable point if the sensitivity parameter $\epsilon$ is small enough. However, implementing RRM requires access to an exact optimization oracle. We now relax this requirement and demonstrate how a simple gradient descent algorithm also converges to a unique stable point.

\begin{definition}[RGD]
\label{def:rgd}
\emph{Repeated gradient descent} (RGD) is the procedure where, starting from an initial model $f_{\theta_0}$, we perform the following sequence of updates for every $t\geq 0$:
$$\theta_{t+1} = \gd(\theta_{t}) \defeq \Pi_{\Theta}\left(\theta_{t} - \eta \lgrad{\theta_{t}}{\theta_{t}}\right),$$
where $\eta> 0$ is a fixed step size and $\Pi_{\Theta}$ denotes the Euclidean projection operator onto $\Theta$.	
\end{definition}

Note that repeated gradient descent only requires the loss $\ell$ to be differentiable with respect to $\theta$. It does not require taking gradients  of the performative risk. Like RRM, we can show that RGD is a contractive mapping for small enough sensitivity parameter $\epsilon$.

\begin{theorem}
\label{theorem:contraction}
Suppose that the loss $\ell(z;\theta)$ is $\beta$-jointly smooth \eqref{ass:a2} and $\gamma$-strongly convex \eqref{ass:a3}. If the distribution map $\D(\cdot)$ is $\epsilon$-sensitive with $\epsilon < \frac{\gamma}{(\beta + \gamma)(1 + 1.5\eta \beta)}$, then RGD with step size $\eta\leq\frac{2}{\beta+\gamma}$ satisfies the following:
\begin{itemize}
	\item[(a)] $\|\gd(\theta)- \gd(\theta')\|_2 \leq \left(1 - \eta\left( \frac{\beta \gamma}{\beta + \gamma}  - \epsilon (1.5\eta\beta^2 + \beta)\right)\right) \norm{\theta - \theta'} < \norm{\theta - \theta'}.$
	\item[(b)] The iterates $\theta_t$ of RGD converge to a unique performatively stable point $\thetaPS$ at a linear rate, $\|\theta_t - \thetaPS\|_2 \leq \delta$ for $t\geq \frac 1 \eta {\left(\frac{\beta\gamma}{\beta + \gamma} - \epsilon (1.5\eta\beta^2 + \beta)\right)^{-1}}\log\left(\frac{\|\theta_0 - \thetaPS\|_2}{\delta}\right)$.
	\end{itemize}
\end{theorem}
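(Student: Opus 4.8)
The plan is to prove part (a) --- that $\gd$ is a contraction --- and then read off part (b), exactly as part (b) of Theorem~\ref{theorem:exact_min_strongly_convex} followed from part (a) there. For part (a), fix $\theta,\theta'\in\Theta$ and write $g(\psi)\defeq\lgrad{\psi}{\psi}$ for the gradient field driving RGD. Since $\Theta$ is convex, $\Pi_{\Theta}$ is nonexpansive, so it suffices to bound $\|(\theta-\eta g(\theta))-(\theta'-\eta g(\theta'))\|_2$. The key step is to split the gradient difference into an optimization term and a distribution-shift term: setting $F_\psi(\varphi)\defeq\ploss{\psi}{\varphi}$, so that $\nabla F_\psi(\varphi)=\lgrad{\psi}{\varphi}$ (exchanging gradient and expectation is valid under joint smoothness), I would write
\[
g(\theta)-g(\theta')=\bigl(\nabla F_\theta(\theta)-\nabla F_\theta(\theta')\bigr)+\bigl(\nabla F_\theta(\theta')-\nabla F_{\theta'}(\theta')\bigr).
\]
The first bracket fixes the distribution $\D(\theta)$ and moves only the parameter, so classical convex optimization applies: $F_\theta$ is $\gamma$-strongly convex by \eqref{ass:a3} and $\beta$-smooth by \eqref{ass:a2}, giving $\|\nabla F_\theta(\theta)-\nabla F_\theta(\theta')\|_2\le\beta\|\theta-\theta'\|_2$ together with the co-coercivity estimate $\langle\nabla F_\theta(\theta)-\nabla F_\theta(\theta'),\,\theta-\theta'\rangle\ge\tfrac{\beta\gamma}{\beta+\gamma}\|\theta-\theta'\|_2^2+\tfrac{1}{\beta+\gamma}\|\nabla F_\theta(\theta)-\nabla F_\theta(\theta')\|_2^2$. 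The second bracket fixes the parameter $\theta'$ and moves only the distribution, so it is the performative term: for any unit vector $v$, the scalar map $z\mapsto v^\top\nabla_\theta\ell(z;\theta')$ is $\beta$-Lipschitz by the second inequality in \eqref{ass:a2}, hence the dual (Kantorovich--Rubinstein) formulation of the Wasserstein-1 distance together with $\epsilon$-sensitivity gives $\|\nabla F_\theta(\theta')-\nabla F_{\theta'}(\theta')\|_2\le\beta\,W_1(\D(\theta),\D(\theta'))\le\epsilon\beta\|\theta-\theta'\|_2$.

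To finish part (a), I would expand $\|(\theta-\theta')-\eta(g(\theta)-g(\theta'))\|_2^2$ into $\|\theta-\theta'\|_2^2-2\eta\langle g(\theta)-g(\theta'),\theta-\theta'\rangle+\eta^2\|g(\theta)-g(\theta')\|_2^2$, plug the co-coercivity bound plus a Cauchy--Schwarz estimate (using the $\epsilon\beta$ bound) into the cross term, bound $\|g(\theta)-g(\theta')\|_2$ by the triangle inequality from the two estimates above, and use the step-size constraint $\eta\le\tfrac{2}{\beta+\gamma}$ to discard the nonpositive residual multiple of $\|\nabla F_\theta(\theta)-\nabla F_\theta(\theta')\|_2^2$. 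Collecting the surviving terms, with one AM--GM step to handle the cross term between the optimization and shift parts, should produce $\|\gd(\theta)-\gd(\theta')\|_2\le\bigl(1-\eta(\tfrac{\beta\gamma}{\beta+\gamma}-\epsilon(1.5\eta\beta^2+\beta))\bigr)\|\theta-\theta'\|_2$. The hypothesis $\epsilon<\tfrac{\gamma}{(\beta+\gamma)(1+1.5\eta\beta)}$ is precisely the condition under which the inner bracket is positive, hence the factor is strictly below $1$; and since that hypothesis forces $\epsilon<\tfrac{\gamma}{\beta+\gamma}$, combined with $\eta\le\tfrac{2}{\beta+\gamma}$ one checks the factor stays positive, so $\gd$ is a genuine contraction.

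For part (b): $\Theta$ is closed, hence a complete metric space, and part (a) shows $\gd\colon\Theta\to\Theta$ is a contraction with modulus $q\defeq1-\eta(\tfrac{\beta\gamma}{\beta+\gamma}-\epsilon(1.5\eta\beta^2+\beta))<1$, so the Banach fixed-point theorem yields a unique fixed point $\thetaPS$ with $\|\theta_t-\thetaPS\|_2\le q^t\|\theta_0-\thetaPS\|_2$. A fixed point of $\gd$ is exactly a stationary point of the strongly convex function $\varphi\mapsto\map(\thetaPS,\varphi)$ over $\Theta$, i.e.\ its unique minimizer, so $\thetaPS$ is performatively stable; and $\log(1/q)\ge 1-q$ turns the geometric bound into the stated iteration count.

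The step I expect to be the main obstacle is the bookkeeping in the second paragraph: unlike ordinary gradient descent, RGD need not contract at all, so one has to track carefully the cross term between the optimization part and the distribution-shift part of $g(\theta)-g(\theta')$ and spend the step-size budget $\eta\le\tfrac{2}{\beta+\gamma}$ exactly, in order to land on the stated modulus and recover the tight threshold on $\epsilon$. The distribution-shift estimate itself is routine once the earlier Wasserstein-duality lemma is available; the substantive point is that strong convexity must leave enough contraction margin $\tfrac{\beta\gamma}{\beta+\gamma}$, after one gradient step, to dominate the expansion that performativity introduces.
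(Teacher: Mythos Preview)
Your proposal is correct and follows essentially the same approach as the paper's proof: drop the projection by nonexpansiveness, split the gradient difference into an optimization part (handled by the co-coercivity inequality for a $\beta$-smooth, $\gamma$-strongly convex function) and a distribution-shift part (handled by Kantorovich--Rubinstein duality plus $\epsilon$-sensitivity), expand the squared distance, and use $\eta\le\tfrac{2}{\beta+\gamma}$ to discard the residual $\|\nabla F_\theta(\theta)-\nabla F_\theta(\theta')\|_2^2$ term before taking a square root; part (b) then follows from Banach's fixed-point theorem exactly as you say. The only cosmetic differences are that the paper routes the split through $\D(\theta')$ instead of $\D(\theta)$ (symmetric) and bounds the cross term in $T_3$ by applying the smoothness estimate $\|\nabla F_\psi(\theta)-\nabla F_\psi(\theta')\|_2\le\beta\|\theta-\theta'\|_2$ directly rather than an AM--GM split---you will want to do the same, since a balanced AM--GM would cost you the sharp constant $1.5\eta\beta^2$.
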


The conclusion of Theorem~\ref{theorem:contraction} is a strict generalization of  a classical optimization result which considers a static objective, in which case the rate of contraction is $\left(1 - \eta \frac{\beta\gamma}{\beta + \gamma}\right)$ (see for example Theorem 2.1.15 in \cite{nesterov2013introductory} or Lemma 3.7 in \cite{hardt2016train}). Our rate exactly matches this classical result in the case that $\epsilon = 0$. The proof of Theorem~\ref{theorem:contraction} can be found in  Appendix~\ref{app:proofGD}.

\subsection{Finite-sample analysis}

We now extend our main results regarding the convergence of RRM and RGD to the finite-sample regime. To do so, we leverage the fact that, under mild regularity conditions, the empirical distribution $\D^n$ given by $n$ samples drawn i.i.d. from a true distribution $\D$ is with high probability close to $\D$ in earth mover's distance \cite{Fournier2015}. 

We begin by defining the finite-sample version of these procedures. 
\begin{definition}[RERM \& REGD]
\label{def:rerm}
	Define \emph{repeated empirical risk minimization} (RERM) to be the procedure where starting from a model $f_{\theta_{0}}$ at every iteration $t \geq 0$, we collect $n_t$ samples from $\D(\theta_t)$ and perform the update:
\begin{equation*}
\theta_{t+1} = G^{n_t}\left(\theta_{t}\right) \;\defeq\; \argmin_\theta \plossemp{\theta_{t}}{n_t}{\theta}.
\end{equation*}
Similarly, define  \emph{repeated empirical gradient descent} (REGD) to be the optimization procedure with update rule:
\begin{equation*}
\theta_{t+1} = \gd^{n_t}(\theta_t) \defeq \Pi_\Theta \left(\theta_t - \eta \E_{Z\sim \D^{n_{t}}(\theta_{t})} \nabla_\theta \ell(Z;\theta_{t})\right).
\end{equation*}
Here, $\eta>0$ is a step size and $\Pi_{\Theta}$ denotes the Euclidean projection operator onto $\Theta$.

\end{definition}

The following theorem illustrates that with enough samples collected at every iteration, with high probability both algorithms converge to a small neighborhood around a stable point. Recall that $m$ is the dimension of data samples $z$.

\begin{theorem}
\label{theorem:finite_samples}
Suppose that the loss $\ell(z;\theta)$ is $\beta$-jointly smooth \eqref{ass:a2} and $\gamma$-strongly convex \eqref{ass:a3}, and that there exist $\alpha>1,\mu>0$ such that $\xi_{\alpha,\mu} \defeq \int_{\R^m} e^{\mu |x|^\alpha} d\D(\theta)$ is finite  $\forall \theta\in\Theta$. Let $\delta\in(0,1)$ be a radius of convergence.  Consider running RERM or RGD with $n_t = O\left(\frac{1}{(\epsilon \delta)^m}\log\big(\frac t p\big)\right)$ samples at time $t$.

\begin{itemize}[nolistsep]
	\item[(a)] If $\D(\cdot)$ is $\epsilon$-sensitive with $\epsilon<\frac{\gamma}{2\beta}$, then with probability $1-p$, RERM satisfies,
\begin{equation*}\|\theta_t - \thetaPS\|_2 \leq \delta, \text{ for all } t\geq \frac{\log\left(\frac 1 \delta\|\theta_0 - \thetaPS\|_2\right)}{\left(1 - \frac{2\epsilon\beta}{\gamma}\right)} .
\end{equation*}
\item[(b)]If  $\D(\cdot)$ is $\epsilon$-sensitive with $\epsilon< \frac{\gamma}{(\beta + \gamma)(1 + 1.5\eta\beta)}$, then with probability $1-p$, REGD with satisfies,
$$\|\theta_t - \thetaPS\|_2 \leq \delta, \text{ for all } t\geq \frac{\log\left(\frac 1 \delta\|\theta_0 - \thetaPS\|_2\right)}{\eta \left(\frac{\beta\gamma}{\beta + \gamma} - \epsilon(3\eta\beta^2 + 2 \beta)\right)},$$
for a constant choice of step size $\eta \leq \frac{2}{\beta + \gamma}.$
\end{itemize}
\end{theorem}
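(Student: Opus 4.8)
The plan is to reduce both finite-sample statements to the population-level contraction results already proved, namely Theorem~\ref{theorem:exact_min_strongly_convex}(a) for RERM and Theorem~\ref{theorem:contraction}(a) for REGD, by controlling the perturbation incurred when the true distribution $\D(\theta_t)$ is replaced by its empirical version $\D^{n_t}(\theta_t)$. The only stochastic quantity that enters is $W_1\big(\D(\theta_t),\D^{n_t}(\theta_t)\big)$, so the argument splits into (i) a deterministic bound showing that a single step of RERM/REGD differs from its population counterpart by $O(W_1)$, (ii) a high-probability bound making $W_1$ small, and (iii) an ``iterated contraction plus bounded noise'' recursion combining the two.

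\emph{Per-step perturbation.} For RERM, write $g(\varphi)=\ploss{\theta}{\varphi}$ and $\hat g(\varphi)=\plossemp{\theta}{n}{\varphi}$, so $G(\theta)$ and $G^n(\theta)$ are the constrained minimizers of $g$ and $\hat g$ over $\Theta$. Using $\gamma$-strong convexity of $g$ (inherited pointwise from \eqref{ass:a3}), monotonicity of $\nabla g$, and the first-order optimality conditions for both minimizers, one obtains $\gamma\,\|G^n(\theta)-G(\theta)\|_2 \le \|\nabla g(G^n(\theta))-\nabla\hat g(G^n(\theta))\|_2$; since $\nabla_\theta\ell(\,\cdot\,;\varphi)$ is $\beta$-Lipschitz in $z$ by \eqref{ass:a2}, the Kantorovich--Rubinstein duality (Lemma~\ref{lemma:duality}) bounds the right-hand side by $\beta\,W_1(\D(\theta),\D^n(\theta))$, giving $\|G^n(\theta)-G(\theta)\|_2 \le \tfrac{\beta}{\gamma}W_1(\D(\theta),\D^n(\theta))$. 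For REGD the analogue is even shorter: non-expansiveness of $\Pi_\Theta$ yields $\|\gd^n(\theta)-\gd(\theta)\|_2 \le \eta\,\|\E_{Z\sim\D^n(\theta)}\nabla_\theta\ell(Z;\theta)-\E_{Z\sim\D(\theta)}\nabla_\theta\ell(Z;\theta)\|_2 \le \eta\beta\,W_1(\D(\theta),\D^n(\theta))$, again via Lemma~\ref{lemma:duality}.

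\emph{Probabilistic control.} Here I would invoke the empirical-measure concentration of Fournier and Guillin~\cite{Fournier2015}: under the sub-Weibull moment condition on $\D(\theta)$ (finiteness of $\xi_{\alpha,\mu}$ over $\Theta$), $\E\, W_1(\D(\theta),\D^n(\theta)) \lesssim n^{-1/m}$ together with an exponential tail, so that $n_t = O\big((\epsilon\delta)^{-m}\log(t/p)\big)$ fresh samples at round $t$ guarantee $W_1(\D(\theta_t),\D^{n_t}(\theta_t)) \le \epsilon\delta$ with failure probability at most, say, $p/t^2$; a union bound over $t\ge 0$ then reserves total failure probability $O(p)$, which is where the $\log(t/p)$ factor comes from. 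One subtlety to flag: $\theta_t$ is itself random and depends on the samples drawn in earlier rounds, so the concentration inequality must be applied \emph{conditionally} on the history --- this is legitimate because the $n_t$ draws at round $t$ are i.i.d.\ from the (given the history, deterministic) distribution $\D(\theta_t)$, and the moment bound holds uniformly over $\Theta$.

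\emph{Putting it together, and the main obstacle.} On the good event, for RERM, using that $\thetaPS$ is the unique fixed point of $G$ (Theorem~\ref{theorem:exact_min_strongly_convex}(b)),
\[
\|\theta_{t+1}-\thetaPS\|_2 \;\le\; \|G^{n_t}(\theta_t)-G(\theta_t)\|_2 + \|G(\theta_t)-G(\thetaPS)\|_2 \;\le\; \tfrac{\epsilon\beta}{\gamma}\delta + \tfrac{\epsilon\beta}{\gamma}\|\theta_t-\thetaPS\|_2 .
\]
With $q\defeq \tfrac{\epsilon\beta}{\gamma} < \tfrac12$ (this is exactly the hypothesis $\epsilon<\tfrac{\gamma}{2\beta}$), whenever $\|\theta_t-\thetaPS\|_2\ge\delta$ the noise term is dominated by the signal and the distance contracts by a factor $2q<1$ per step, so the $\delta$-ball is reached within $\tfrac{\log(\|\theta_0-\thetaPS\|_2/\delta)}{1-2q}$ steps; once inside, $\|\theta_{t+1}-\thetaPS\|_2\le q\delta+q\delta = 2q\delta \le \delta$ keeps it there. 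REGD is identical with the contraction factor of Theorem~\ref{theorem:contraction}(a) in place of $q$ and the $\eta\beta\,W_1\le\eta\beta\epsilon\delta$ noise folded into the contraction coefficient, which is precisely why the $\epsilon$-dependent correction in the stated REGD rate is roughly doubled relative to the population rate. The main obstacle I anticipate is not any single step but the bookkeeping around it: propagating the probabilistic event through the union bound while the iterates are data-dependent, checking that the target accuracy $\epsilon\delta$ lies in the regime where the Fournier--Guillin tail bound is valid (using $\delta<1$ and the $\epsilon$-bounds), and keeping the constants in the two-phase (contract, then stay) recursion aligned with the claimed rates.
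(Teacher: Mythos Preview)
Your proposal is correct and lands on the same two-phase analysis (contract while outside the $\delta$-ball, stay once inside), the same Fournier--Guillin concentration for $W_1(\D(\theta_t),\D^{n_t}(\theta_t))$, and the same union bound over rounds as the paper. The one genuine difference is the decomposition used in the per-step argument. The paper does not separate $G^{n_t}(\theta_t)$ from $G(\theta_t)$; instead it bounds $W_1(\D^{n_t}(\theta_t),\D(\thetaPS))\le 2\epsilon\|\theta_t-\thetaPS\|_2$ (resp.\ $\le 2\epsilon\delta$ in Case~2) via the triangle inequality, and then \emph{re-runs the proof mechanics of Theorem~\ref{theorem:exact_min_strongly_convex}} with $\D^{n_t}(\theta_t)$ playing the role of a ``$2\epsilon$-sensitive'' distribution, obtaining $\|\theta_{t+1}-\thetaPS\|_2\le \tfrac{2\epsilon\beta}{\gamma}\|\theta_t-\thetaPS\|_2$ directly. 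You instead route through $G(\theta_t)$, using Theorem~\ref{theorem:exact_min_strongly_convex}(a) as a black box for $\|G(\theta_t)-\thetaPS\|_2$ and the clean perturbation bound $\|G^{n}(\theta)-G(\theta)\|_2\le\tfrac{\beta}{\gamma}W_1$ for the stochastic piece. Both yield the contraction factor $2\epsilon\beta/\gamma$ for RERM; your route is more modular (no need to reopen the earlier proof) and your flag about applying Fournier--Guillin \emph{conditionally} on the history is a point the paper glosses over. For REGD, your decomposition actually gives a slightly smaller $\epsilon$-correction ($1.5\eta\beta^2+2\beta$ versus the paper's doubled $3\eta\beta^2+2\beta$), which still implies the stated bound since the theorem's rate formula already implicitly requires the larger correction to be dominated by $\tfrac{\beta\gamma}{\beta+\gamma}$.
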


\begin{proof}[Proof sketch.]
The basic idea behind these results is the following. While $\|\theta_t- \thetaPS\|_2>\delta$, the sample size $n_t$ is sufficiently large to ensure a behavior similar to that on a population level: as in Theorems \ref{theorem:exact_min_strongly_convex} and \ref{theorem:contraction}, the iterates $\theta_t$ contract toward $\thetaPS$. This implies that $\theta_t$ eventually enters a $\delta$-ball around $\thetaPS$, for some large enough $t$. Once this happens, contrary to population-level results, a contraction is no longer guaranteed due to the noise inherent in observing only finite-sample approximations of $\D(\theta_t)$. Nevertheless, the sample size $n_t$ is sufficiently large to ensure that $\theta_t$ cannot escape a $\delta$-ball around $\thetaPS$ either.
\end{proof}

% !TEX root = main.tex

\section{Relating performative optimality and stability}

As we discussed previously, while performative optima are always guaranteed to exist,\footnote{In particular, they are guaranteed to exist over the extended real line, i.e. we allow $\theta\in(\R\cup \{\pm\infty\})^d$.} it is not clear whether performatively stable points exist in all settings. Our algorithmic analysis of repeated risk minimization and repeated gradient descent revealed the existence of unique stable points under the assumption that the objective is strongly convex and smooth. The first result of this section illustrates existence of stable points under weaker assumptions on the loss, in the case where the solution space $\Theta$ is constrained. All proofs can be found in Appendix \ref{app:mainproofs}.

\begin{proposition}
\label{propo:existence_stable_points}
	Let the distribution map $\D(\cdot)$ be $\epsilon$-sensitive and $\Theta \subset \R^d$ be compact. If the loss $\ell(z;\theta)$ is convex and jointly continuous in $(z,\theta)$, then there exists a performatively stable point.
\end{proposition}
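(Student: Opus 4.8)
The plan is to prove existence of a performatively stable point via a fixed-point argument applied to the retraining map $G(\theta) = \argmin_{\theta' \in \Theta} \map(\theta, \theta')$. A performatively stable point is exactly a fixed point of $G$, so it suffices to verify the hypotheses of a suitable fixed-point theorem. Since we have dropped strong convexity, $G$ need no longer be a contraction (or even single-valued), so Banach's theorem is unavailable; instead I would aim for Brouwer's fixed-point theorem (or Kakutani's, if the argmin set is not a singleton). Either way the target is: $G$ maps the compact convex set $\Theta$ into itself (which is immediate, since $G(\theta) \in \Theta$ by definition of the constrained argmin) and $G$ is continuous (resp. has closed graph with nonempty convex values).

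The main work is therefore establishing continuity of $G$. First I would use convexity of $\theta' \mapsto \ell(z;\theta')$, together with compactness of $\Theta$ and joint continuity of $\ell$, to argue that for each fixed $\theta$ the objective $\varphi \mapsto \map(\theta,\varphi) = \E_{Z\sim\D(\theta)}\ell(Z;\varphi)$ is finite, convex, and continuous on $\Theta$, hence attains its minimum on the compact set $\Theta$ — so $G(\theta)$ is well-defined (as a nonempty set). To handle the expectation carefully one wants a local integrability/uniform-boundedness bound: joint continuity plus compactness of $\Theta$ and of (a neighborhood of) the relevant support should give that $\ell(z;\varphi)$ is dominated locally, letting one pass limits through the expectation; I would either invoke such a bound or note that $\mathcal Z$ may be taken bounded under the stated hypotheses (this is a point to be careful about). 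Next, for continuity of $G$ in $\theta$: if $\theta_k \to \theta$, then $\epsilon$-sensitivity gives $W_1(\D(\theta_k),\D(\theta)) \to 0$, i.e. $\D(\theta_k) \to \D(\theta)$ weakly (and in first moment), so by the dual/Kantorovich characterization of $W_1$ and joint continuity of $\ell$ one gets $\map(\theta_k,\varphi) \to \map(\theta,\varphi)$, in fact uniformly over $\varphi\in\Theta$ by an equicontinuity argument (the family $\{\ell(\cdot,\varphi)\}_{\varphi\in\Theta}$ is equi-Lipschitz-ish / uniformly equicontinuous on the compact support). Uniform convergence of a sequence of continuous convex functions on a compact convex set implies convergence of their minimizers in the appropriate sense (Berge's maximum theorem): the map $\theta \mapsto \argmin_\varphi \map(\theta,\varphi)$ is upper hemicontinuous with nonempty compact convex values. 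That is exactly the input Kakutani's theorem needs, yielding $\bar\theta \in G(\bar\theta)$, which by definition is a performatively stable point. (If one prefers Brouwer, one breaks ties in the argmin by a fixed rule and argues the selection is continuous, but Kakutani is cleaner and avoids that.)

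The hard part will be the passage to the limit inside the expectation and the uniformity of the convergence $\map(\theta_k,\cdot)\to\map(\theta,\cdot)$: joint continuity alone (without Lipschitzness or a growth bound) does not automatically let one move limits through $\E_{Z\sim\D(\theta_k)}$, so one needs to exploit that $W_1$-convergence controls both weak convergence and first moments, and combine this with a careful use of compactness of $\Theta$ to get the needed domination / uniform integrability of $\{\ell(\cdot;\varphi)\}$. Once that analytic step is in hand, the remainder — Berge's maximum theorem to get upper hemicontinuity of $G$, then Kakutani — is essentially bookkeeping. I would present the proof in the order: (1) $\map(\theta,\cdot)$ is convex, finite, continuous, so $G(\theta)$ is a nonempty compact convex subset of $\Theta$; (2) $W_1$-continuity of $\D(\cdot)$ plus joint continuity of $\ell$ gives (locally uniform) continuity of $(\theta,\varphi)\mapsto\map(\theta,\varphi)$; (3) Berge $\Rightarrow$ $G$ is upper hemicontinuous; (4) Kakutani $\Rightarrow$ a fixed point exists $\Rightarrow$ a performatively stable point exists.
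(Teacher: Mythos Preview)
Your proposal is correct and follows essentially the same route as the paper: define the set-valued retraining map $G(\theta)=\argmin_{\varphi\in\Theta}\map(\theta,\varphi)$, use Berge's maximum theorem to get upper hemicontinuity with nonempty compact values, use convexity of the loss to get convex values, and then apply Kakutani's fixed-point theorem. If anything, you are more careful than the paper about the analytic step of verifying joint continuity of $(\theta,\varphi)\mapsto\map(\theta,\varphi)$ via $\epsilon$-sensitivity and $W_1$-convergence; the paper simply asserts that joint continuity of $\ell$ and compactness of $\Theta$ suffice to invoke Berge.
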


A natural question to consider at this point is whether there are procedures analogous to RRM and RGD for efficiently computing performative optima. 

Our analysis suggests that directly minimizing the performative risk is in general a more challenging problem than finding performatively stable points. In particular, we can construct simple examples where the performative risk $\PR(\theta)$ is non-convex, despite strong regularity assumptions on the loss and the distribution map. 

\begin{proposition}
\label{prop:propoconcave}
	The performative risk $\PR(\theta)$ can be concave in $\theta$, even if the loss $\ell(z;\theta)$ is $\beta$-jointly smooth \eqref{ass:a2}, $\gamma$-strongly convex \eqref{ass:a3}, and the distribution map $\D(\cdot)$ is $\epsilon$-sensitive with $\epsilon < \frac{\gamma}{\beta}$.
\end{proposition}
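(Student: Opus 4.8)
The plan is to settle the proposition by exhibiting a single explicit one–dimensional example in which the performative risk is strictly concave. Take $\Theta = [-1,1]\subset\R$, the squared loss $\ell\big((x,y);\theta\big) = (y-\theta)^2$ (so the feature plays no role), and a distribution map under which $X$ is deterministically a fixed value $x_0$ independent of $\theta$, while $Y$ is uniform on the two–point set $\{+s(\theta),-s(\theta)\}$ for a function $s:[-1,1]\to\Rnn$ to be chosen. With this loss, $\nabla_\theta\ell\big((x,y);\theta\big) = 2\theta - 2y$ is $2$–Lipschitz in $\theta$ and in $z$, and $\partial_\theta^2\ell \equiv 2$, so $\ell$ is $\beta$–jointly smooth and $\gamma$–strongly convex with $\beta=\gamma=2$; the target regime is therefore $\epsilon < \gamma/\beta = 1$.

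Next I would compute the two relevant quantities. Since $Y$ is symmetric with second moment $s(\theta)^2$,
\[
\PR(\theta) = \E_{Z\sim\D(\theta)}(Y-\theta)^2 = s(\theta)^2 + \theta^2 .
\]
For $\epsilon$–sensitivity, I would identify the optimal coupling between $\D(\theta)$ and $\D(\theta')$ as the one matching the positive atom to the positive atom and the negative atom to the negative atom; this beats the ``crossing'' coupling because $s(\theta)+s(\theta')\ge |s(\theta)-s(\theta')|$. Hence $W_1\big(\D(\theta),\D(\theta')\big) = |s(\theta)-s(\theta')|$, so $\D(\cdot)$ is $\epsilon$–sensitive with $\epsilon$ equal to the Lipschitz constant of $s$ on $[-1,1]$ (a convex domain, so this is $\sup_{[-1,1]}|s'|$).

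It then remains to pick $s$ so that (i) $s(\theta)^2+\theta^2$ is concave and (ii) $s$ is Lipschitz with constant strictly below $1$. Taking $s(\theta) = \sqrt{8 - 2\theta^2}$ works: then $\PR(\theta) = (8-2\theta^2)+\theta^2 = 8-\theta^2$ is strictly concave on $[-1,1]$, while $s'(\theta) = -2\theta/\sqrt{8-2\theta^2}$ satisfies $|s'(\theta)| \le |s'(\pm 1)| = 2/\sqrt 6 < 1$ throughout the domain, so $\D(\cdot)$ is $\epsilon$–sensitive with $\epsilon = 2/\sqrt 6 < \gamma/\beta$. This proves the claim.

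The only genuine tension — and the thing to get right when choosing $s$ — is that making $\PR$ strictly concave forces the variance term $s(\theta)^2 = \Var(Y)$ to be a concave function of $\theta$ whose curvature strictly exceeds that of the bias term $\theta^2$, whereas the constraint $\epsilon < \gamma/\beta = 1$ caps how fast $s(\theta) = \sqrt{\Var(Y)}$ may vary. These are reconciled by working on a bounded $\Theta$ and inflating the baseline variance (the constant $8$) so that $s$ stays bounded away from $0$ and hence has small slope; more generally $s(\theta) = \sqrt{C - a\theta^2}$ with $a>1$ and $C$ large enough does the job. A minor point to handle carefully is verifying that the earth–mover distance equals $|s(\theta)-s(\theta')|$ exactly (needed to pin down $\epsilon$, not merely to bound it), which is exactly the optimal–coupling argument above.
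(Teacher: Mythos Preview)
Your construction is correct and complete: the squared loss with $\beta=\gamma=2$, the two–point symmetric distribution for $Y$, the identification $W_1(\D(\theta),\D(\theta'))=|s(\theta)-s(\theta')|$ via the monotone coupling, and the choice $s(\theta)=\sqrt{8-2\theta^2}$ all check out, yielding $\PR(\theta)=8-\theta^2$ with $\epsilon=2/\sqrt 6<1=\gamma/\beta$.

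Your route is genuinely different from the paper's. The paper reuses the biased–coin example (Example~\ref{example:biasedcointoss}): a feature $X\in\{\pm 1\}$, a linear predictor $f_\theta(x)=\theta x+\tfrac12$, and $Y\mid X\sim\mathrm{Bernoulli}(\tfrac12+\mu X+\epsilon\theta X)$. There the \emph{mean} of $Y$ shifts linearly in $\theta$, and a direct computation gives $\PR(\theta)=\tfrac14-2\mu\theta+(1-2\epsilon)\theta^2$, which is concave precisely when $\epsilon\in[\tfrac12,1)$. By contrast, you keep $\E[Y]\equiv 0$ and make the \emph{spread} $s(\theta)$ vary, so concavity enters through the variance term $s(\theta)^2$. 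The paper's version has the advantage that the sensitivity parameter is the literal coefficient $\epsilon$ in the model, so the threshold $\epsilon=\tfrac12$ at which $\PR$ turns concave is transparent and tunable; your version is more self–contained (no feature, no Bernoulli bookkeeping) and delivers a fixed strictly concave $\PR$ without any case analysis, at the cost of a small calculus check on $\sup|s'|$ and an explicit optimal–coupling argument for $W_1$.
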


However, what we can show is that there are cases where finding performatively stable points is sufficient to guarantee that the resulting model has low performative risk. In particular, our next result demonstrates that if the loss function $\ell(z;\theta)$ is Lipschitz in $z$ and $\gamma$-strongly convex, then all performatively stable points and performative optima lie in a small neighborhood around each other. Moreover, the theorem holds for cases where performative optima and performatively stable points are not necessarily unique.

\begin{theorem}
\label{theorem:closeness}
Suppose that the loss $\ell(z;\theta)$ is $L_z$-Lipschitz  in $z$, $\gamma$-strongly convex  \eqref{ass:a3}, and that the distribution map $\D(\cdot)$ is $\epsilon$-sensitive. Then, for every performatively stable point $\thetaPS$ and every performative optimum $\thetaPO$:
 $$\|\thetaPO- \thetaPS\|_2\leq \frac{2L_z\epsilon}{\gamma}.$$
\end{theorem}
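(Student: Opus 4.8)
The plan is to use $\gamma$-strong convexity of the decoupled risk anchored at $\thetaPS$ to lower bound the suboptimality of $\thetaPO$ against that objective, and then to upper bound the \emph{same} quantity using performative optimality of $\thetaPO$ together with the Kantorovich--Rubinstein dual characterization of $W_1$ (Lemma~\ref{lemma:duality}). No smoothness of $\ell$ and no regularity of $\PR$ itself will be needed, which is what makes the result robust to non-uniqueness of stable points and optima.

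First, since $\ell(z;\cdot)$ is $\gamma$-strongly convex for every $z$, the function $\varphi\mapsto \map(\thetaPS,\varphi)=\ploss{\thetaPS}{\varphi}$ is $\gamma$-strongly convex, and by definition of performative stability $\thetaPS$ is its minimizer. The standard quadratic-growth bound for a strongly convex function around its minimizer therefore gives
\[
\map(\thetaPS,\thetaPO)-\map(\thetaPS,\thetaPS)\;\geq\;\frac{\gamma}{2}\,\|\thetaPO-\thetaPS\|_2^2 .
\]
Next I bound the left-hand side from above. Since $\thetaPO$ minimizes the performative risk, $\PR(\thetaPO)\leq\PR(\thetaPS)$, i.e.\ $\map(\thetaPO,\thetaPO)\leq\map(\thetaPS,\thetaPS)$, so
\[
\map(\thetaPS,\thetaPO)-\map(\thetaPS,\thetaPS)\;\leq\;\map(\thetaPS,\thetaPO)-\map(\thetaPO,\thetaPO)\;=\;\E_{Z\sim\D(\thetaPS)}\ell(Z;\thetaPO)-\E_{Z\sim\D(\thetaPO)}\ell(Z;\thetaPO).
\]
The last expression is the difference of the expectation of the single function $z\mapsto\ell(z;\thetaPO)$ under $\D(\thetaPS)$ versus $\D(\thetaPO)$; since this function is $L_z$-Lipschitz in $z$, the dual formulation of the earth mover's distance and $\epsilon$-sensitivity yield
\[
\E_{Z\sim\D(\thetaPS)}\ell(Z;\thetaPO)-\E_{Z\sim\D(\thetaPO)}\ell(Z;\thetaPO)\;\leq\;L_z\,W_1\big(\D(\thetaPS),\D(\thetaPO)\big)\;\leq\;L_z\,\epsilon\,\|\thetaPS-\thetaPO\|_2 .
\]
Chaining the three displays gives $\tfrac{\gamma}{2}\|\thetaPO-\thetaPS\|_2^2\leq L_z\epsilon\|\thetaPO-\thetaPS\|_2$; dividing by $\|\thetaPO-\thetaPS\|_2$ (the bound is trivial when this is zero) gives $\|\thetaPO-\thetaPS\|_2\leq 2L_z\epsilon/\gamma$.

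The computation is short, and the only genuine idea — the one place where there is anything to ``see'' — is the substitution $\map(\thetaPS,\thetaPS)\geq\map(\thetaPO,\thetaPO)$: it is exactly what converts the strong-convexity gap into a pure distribution-shift term that $W_1$ can control. The one mild point to be careful about is that ``$\thetaPS$ minimizes $\map(\thetaPS,\cdot)$'' should be read as selecting the minimizer of that fixed strongly convex function, not as an assertion that stable points are globally unique; this is consistent with the theorem being stated for \emph{every} $\thetaPS$ and \emph{every} $\thetaPO$.
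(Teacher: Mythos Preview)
Your proof is correct and follows essentially the same approach as the paper's: both combine the strong-convexity quadratic growth of $\map(\thetaPS,\cdot)$ at its minimizer $\thetaPS$ with the bound $\map(\thetaPS,\thetaPO)-\map(\thetaPO,\thetaPO)\le L_z\epsilon\|\thetaPS-\thetaPO\|_2$ obtained from Kantorovich--Rubinstein and $\epsilon$-sensitivity, linked via $\map(\thetaPO,\thetaPO)\le\map(\thetaPS,\thetaPS)$. The only cosmetic differences are that the paper phrases the final step as a contradiction rather than a direct chain, and it explicitly invokes the first-order optimality condition (Lemma~\ref{lemma:first_order_opt_condition}) to justify the quadratic-growth bound over the constrained set $\Theta$, which you absorb into ``standard quadratic-growth.''
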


This result shows that in cases where repeated risk minimization converges to a stable point, the resulting model approximately minimizes the performative risk.

Moreover, Theorem \ref{theorem:closeness} suggests a way of converging close to performative optima \emph{in objective value} even if the loss function is smooth and convex, but not strongly convex. In particular, by adding quadratic regularization to the objective, we can ensure that RRM or RGD converge to a stable point that approximately minimizes the performative risk, see Appendix \ref{sec:regularization}.

% !TEX root = main.tex

\section{A case study in strategic classification}
\label{sec:exp}
Having presented our model for performative prediction, we now proceed to illustrate how these ideas can be applied within the context of strategic classification and discuss some of the implications of our theorems for this setting.

We begin by formally establishing how strategic classification can be cast as a performative prediction problem and illustrate how our framework can be used to derive results regarding the convergence of popular retraining heuristics in strategic classification settings. Afterwards, we further develop the connections between both fields by empirically evaluating the behavior of repeated risk minimization on a dynamic credit scoring task. 

\subsection{Stackelberg equilibria are performative optima}

Strategic classification is a two-player game between an institution which deploys a classifier and agents who selectively adapt their features in order to improve their outcomes. 

A classic example of this setting is that of a bank which uses a machine learning classifier to predict whether or not a loan applicant is creditworthy. Individual applicants react to the bank's classifier by manipulating their features with the hopes of inducing a favorable classification. This game is said to have a \emph{Stackelberg} structure since agents adapt their features only after the bank has deployed their classifier.

The optimal strategy for the institution in a strategic classification setting is to deploy the solution corresponding to the \emph{Stackelberg equilibrium}, defined as the classifier $f_\theta$ which achieves minimal loss over the induced distribution $\D(\theta)$ in which agents have strategically adapted their features in response to $f_\theta$. In fact, we see that this equilibrium notion exactly matches our definition of performative optimality: 
\begin{equation*}
%\label{eq:stackelberg}
f_{\theta_{\mathrm{SE}}} \text{ is a Stackelberg equilibrium} \iff \theta_{\mathrm{SE}} \in \argmin_\theta \PR(\theta).
\end{equation*}

We think of $\D$ as a "baseline" distribution over feature-outcome pairs before any classifier deployment, and $\D(\theta)$ denotes the distribution over features and outcomes obtained by strategically manipulating $\D$. As described in existing work \cite{hardt2016strategic, milli2019social, bruckner2012static}, the distribution function $\D(\theta)$ in strategic classification corresponds to the data-generating process outlined in Figure~\ref{fig:distmap}.
\begin{figure}[t]
\setlength{\fboxsep}{2mm}
\begin{center}
\begin{boxedminipage}{\textwidth}
\noindent {\bf Input:} base distribution $\D$, classifier $f_\theta$, cost function $c$, and utility function $u$ \\
\noindent {\bf Sampling procedure for $\D(\theta)$:}
\begin{enum}
\item Sample $(x,y) \sim \D$
\item Compute best response $x_{\mathrm{BR}} \leftarrow \argmax_{x'} u(x', \theta) - c(x',x)$
\item Output sample $(x_{\mathrm{BR}}, \;y)$ 
\end{enum}
\end{boxedminipage}
\end{center}
\vspace{-3mm}
\caption{Distribution map for strategic classification.}
\label{fig:distmap} 
\end{figure}

Here, $u$ and $c$ are problem-specific functions which determine the best response for agents in the game. Together with the base distribution $\D$, these define the relevant distribution map $\D(\cdot)$ for the problem of strategic classification. 

A strategy that is commonly adapted in practice as a means of coping with the distribution shift that arises in strategic classification is to repeatedly retrain classifiers on the induced distributions. This procedure corresponds to the repeated risk minimization procedure introduced in Definition \ref{def:rrm}. Our results describe the first set of sufficient conditions under which repeated retraining overcomes strategic effects.

\begin{corollary}
\label{corollary:strategic_classification}
	Let the institution's loss $\ell(z;\theta)$ be $L_z$- and $L_\theta$-Lipschitz in $z$ and $\theta$ respectively, $\beta$-jointly smooth  \eqref{ass:a2}, and $\gamma$-strongly convex \eqref{ass:a3}. If the induced distribution map is $\epsilon$-sensitive, with $\epsilon <\frac \gamma  \beta$, then RRM converges at a linear rate to a performatively stable classifier $\thetaPS$ that is $2L_z \epsilon (L_\theta + L_z\epsilon) \gamma^{-1}$ close in objective value to the Stackelberg equilibrium. 

\end{corollary}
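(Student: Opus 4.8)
The plan is to assemble the corollary from three ingredients already established: the identification of Stackelberg equilibria with performative optima, the convergence guarantee of Theorem~\ref{theorem:exact_min_strongly_convex}, and the closeness bound of Theorem~\ref{theorem:closeness} — with the one genuinely new (but short) step being a Lipschitz bound on the performative risk $\PR(\cdot)$. First I would recall that in strategic classification $f_{\thetaSE}$ being a Stackelberg equilibrium is equivalent to $\thetaSE \in \argmin_\theta \PR(\theta)$, so $\thetaSE$ is a performative optimum $\thetaPO$. Since by hypothesis $\ell$ is $\beta$-jointly smooth \eqref{ass:a2} and $\gamma$-strongly convex \eqref{ass:a3}, and $\D(\cdot)$ is $\epsilon$-sensitive with $\epsilon < \frac{\gamma}{\beta}$, Theorem~\ref{theorem:exact_min_strongly_convex}(b) applies verbatim and yields linear-rate convergence of RRM to the unique performatively stable point $\thetaPS$. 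This settles the convergence half of the statement.

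The key remaining step is to show $\PR$ is $(L_\theta + L_z\epsilon)$-Lipschitz on $\Theta$. I would bound, for arbitrary $\theta,\theta'\in\Theta$,
\[
\abs{\PR(\theta) - \PR(\theta')} \leq \Abs{\E_{Z\sim\D(\theta)}\ell(Z;\theta) - \E_{Z\sim\D(\theta')}\ell(Z;\theta)} + \Abs{\E_{Z\sim\D(\theta')}\ell(Z;\theta) - \E_{Z\sim\D(\theta')}\ell(Z;\theta')}.
\]
For the first term, the slice $z\mapsto \ell(z;\theta)$ is $L_z$-Lipschitz, so the Kantorovich--Rubinstein dual formulation of the earth mover's distance (Lemma~\ref{lemma:duality}) together with $\epsilon$-sensitivity gives $\leq L_z\, W_1(\D(\theta),\D(\theta')) \leq L_z\epsilon\norm{\theta-\theta'}$. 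For the second term, $L_\theta$-Lipschitzness of $\ell$ in $\theta$ gives $\abs{\ell(z;\theta) - \ell(z;\theta')} \leq L_\theta\norm{\theta-\theta'}$ pointwise, hence the same bound in expectation. Adding the two yields the claimed Lipschitz constant.

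Finally, Theorem~\ref{theorem:closeness} applies (its hypotheses — $\ell$ being $L_z$-Lipschitz in $z$, $\gamma$-strongly convex, and $\D(\cdot)$ being $\epsilon$-sensitive — are all contained in ours), giving $\norm{\thetaPO - \thetaPS} \leq \frac{2L_z\epsilon}{\gamma}$. Chaining this with the Lipschitz bound on $\PR$ and using $\PR(\thetaPS) \geq \PR(\thetaPO) = \PR(\thetaSE)$ gives $\PR(\thetaPS) - \PR(\thetaSE) \leq (L_\theta + L_z\epsilon)\cdot \frac{2L_z\epsilon}{\gamma} = 2L_z\epsilon(L_\theta + L_z\epsilon)\gamma^{-1}$, which is exactly the asserted objective-value gap. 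There is no deep obstacle here; the proof is essentially a bookkeeping combination of prior results. The only places calling for care are (i) applying the Wasserstein bound to a \emph{fixed} loss slice $\ell(\cdot;\theta)$ rather than to the difference $\ell(\cdot;\theta) - \ell(\cdot;\theta')$, and (ii) noting that the existence of $\thetaSE = \thetaPO$ is presupposed, consistent with the hypotheses of Theorem~\ref{theorem:closeness}.
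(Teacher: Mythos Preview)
Your proposal is correct and follows essentially the same approach as the paper: invoke Theorem~\ref{theorem:exact_min_strongly_convex} for linear convergence to $\thetaPS$, Theorem~\ref{theorem:closeness} for the parameter distance $\norm{\thetaPS-\thetaSE}\leq 2L_z\epsilon/\gamma$, and then a triangle-inequality split of $\PR(\thetaPS)-\PR(\thetaSE)$ into a ``change-of-distribution'' piece (bounded via Kantorovich--Rubinstein and $\epsilon$-sensitivity by $L_z\epsilon\norm{\thetaPS-\thetaSE}$) and a ``change-of-parameter'' piece (bounded by $L_\theta\norm{\thetaPS-\thetaSE}$). The only cosmetic difference is that the paper splits through $\map(\thetaPS,\thetaSE)$ whereas you split through $\map(\thetaSE,\thetaPS)$ and phrase the result as a global Lipschitz constant for $\PR$; the two are equivalent.
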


\subsection{Simulations}

We next examine the convergence of repeated risk minimization and repeated gradient descent in a simulated strategic classification setting. We run experiments on a dynamic credit scoring simulator in which an institution classifies the creditworthiness of loan applicants.\footnote{Code is available at \href{https://github.com/zykls/performative-prediction}{https://github.com/zykls/performative-prediction}, and the simulator has been integrated into the WhyNot software package~\cite{miller2020whynot}.} As motivated previously, agents react to the institution's classifier by manipulating their features to increase the likelihood that they receive a favorable classification.

To run our simulations, we construct a distribution map $\D(\theta)$, as described in Figure~\ref{fig:distmap}. For the base distribution $\D$, we use a class-balanced subset of a Kaggle credit scoring dataset \cite{creditdata}. Features $x\in\R^{m-1}$ correspond to historical information about an individual, such as their monthly income and number of credit lines. Outcomes $y\in\{0,1\}$ are binary variables which are equal to 1 if the individual defaulted on a loan and 0 otherwise.  

\begin{figure}[t]
  \centering
        \includegraphics[width=0.48\linewidth]{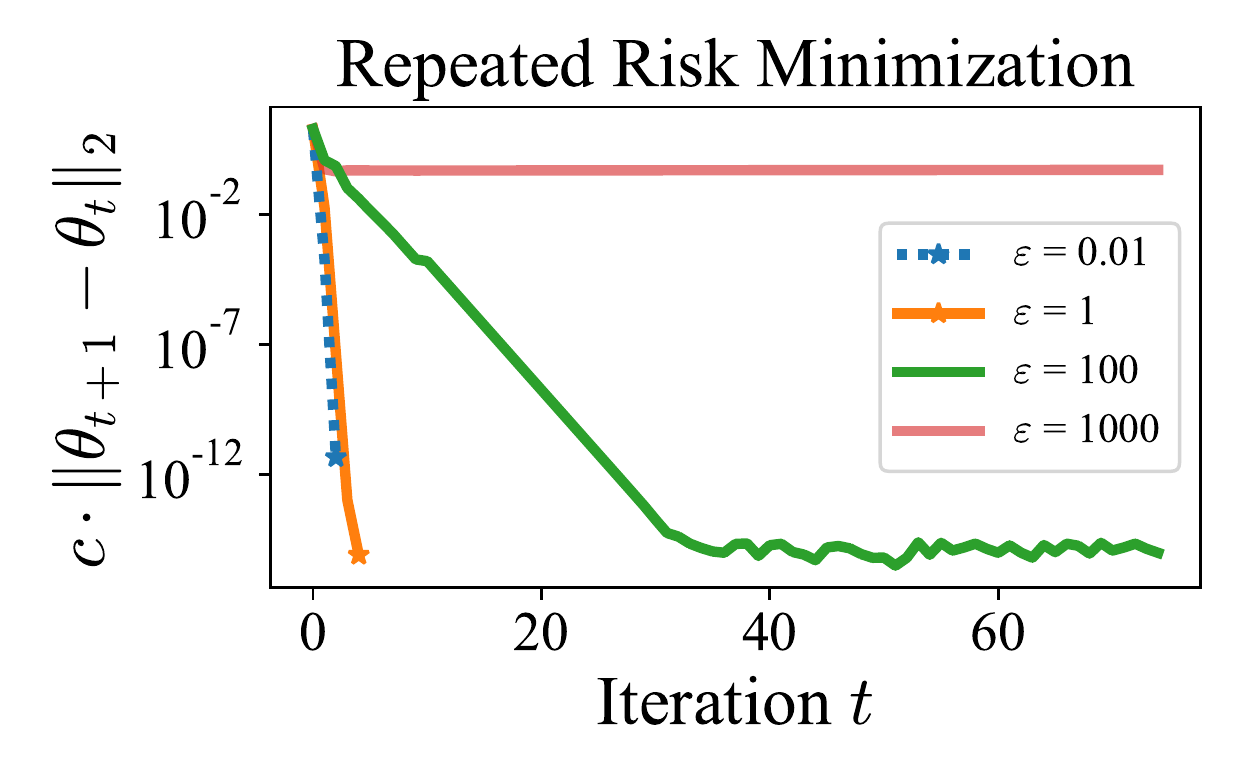}~~~
        \includegraphics[width=0.48\linewidth]{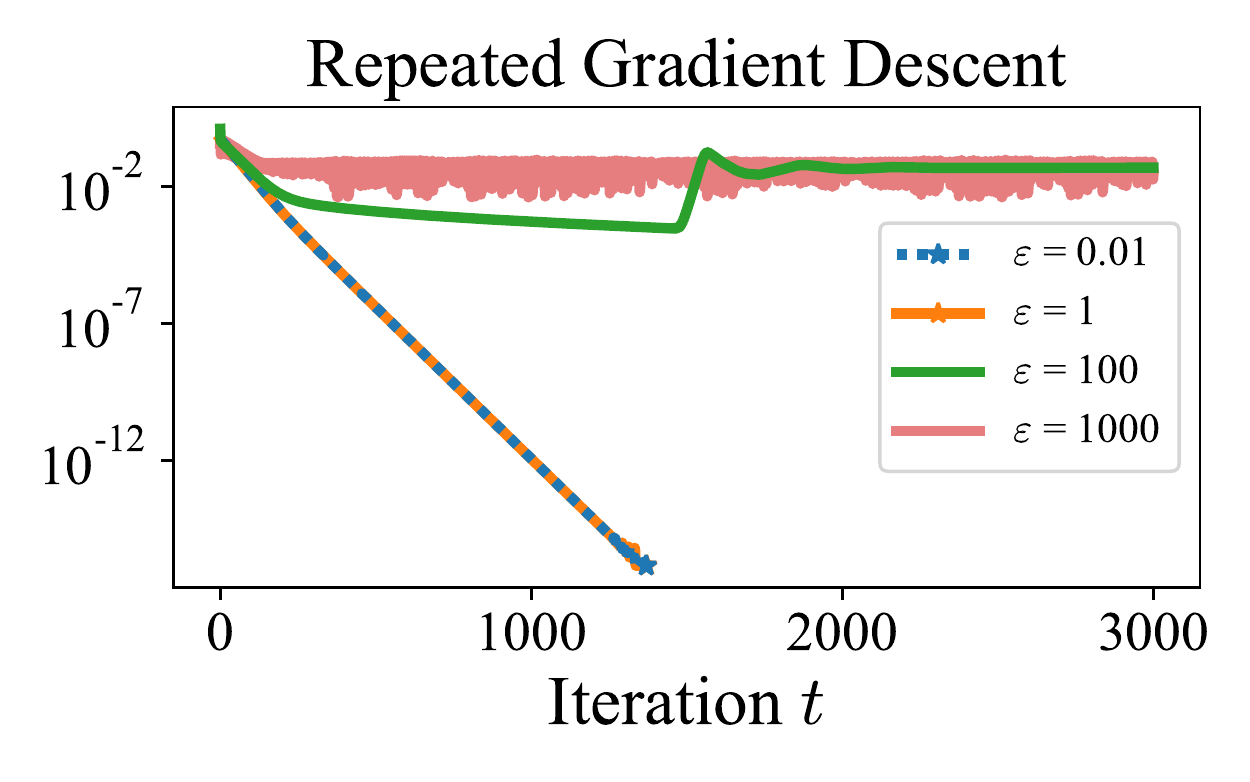}
\caption{Convergence in domain of RRM (left) and RGD (right) for varying $\epsilon$-sensitivity parameters. We add a marker if at the next iteration the distance between iterates is numerically zero. We normalize the distance by $c=\|\theta_{0,S}\|_2^{-1}$.}
\label{fig:minimization}
\end{figure}	
 
The institution makes predictions using a logistic regression classifier. We assume that individuals have linear utilities $u(\theta, x) = -\langle \theta, x \rangle$ and quadratic costs $c(x', x) = \frac{1}{2\epsilon}\norm{x'-x}^2$, where $\epsilon$ is a positive constant that regulates the cost incurred by changing features. Linear utilities indicate that agents wish to minimize their assigned probability of default. 

We divide the set of features into strategic features $S \subseteq [m-1]$, such as the number of open credit lines, and non-strategic features (e.g., age). Solving the optimization problem described in  Figure~\ref{fig:distmap}, the best response for an individual corresponds to the following update, 
\begin{equation*}
%\label{eq:strategic_update}
x'_S = x_S -\epsilon\theta_S,
\end{equation*}
where $x_S,x'_S, \theta_S\in\R^{|S|}$. As per convention in the literature \cite{bruckner2012static, hardt2016strategic, milli2019social}, individual outcomes $y$ are unaffected by strategic manipulation. 

Intuitively, this data-generating process is $\epsilon$-sensitive since for a given choice of classifiers, $f_\theta$ and $f_{\theta'}$, an individual feature vector is shifted to  $x_S - \epsilon \theta_S$ and to $x_S-\epsilon\theta'_S$, respectively. The distance between these two shifted points is equal to $\epsilon\|\theta_S - \theta'_S\|_2$. Since the optimal transport distance is bounded by $\epsilon\|\theta - \theta'\|_2$ for every individual point, it is also bounded by this quantity over the entire distribution. A full proof of this claim is presented in Appendix~\ref{sec:exp_details}.

For our experiments, instead of sampling from $\D(\theta)$, we treat the points in the original dataset as the true distribution. Hence, we can think of all the following procedures as operating at the population level. Furthermore, we add a regularization term to the logistic loss to ensure that the objective is strongly convex.

\paragraph{Repeated risk minimization.} The first experiment we consider is the convergence of RRM. From our theoretical analysis, we know that RRM is guaranteed to converge at a linear rate to a performatively stable point if the sensitivity parameter $\epsilon$ is smaller than $\frac{\gamma}{\beta}$. In Figure~\ref{fig:minimization} (left), we see that RRM does indeed converge in only a few iterations for small values of $\epsilon$ while it divergences if $\epsilon$ is too large.

\begin{figure}[t]
  \centering
        \includegraphics[width=0.45\linewidth]{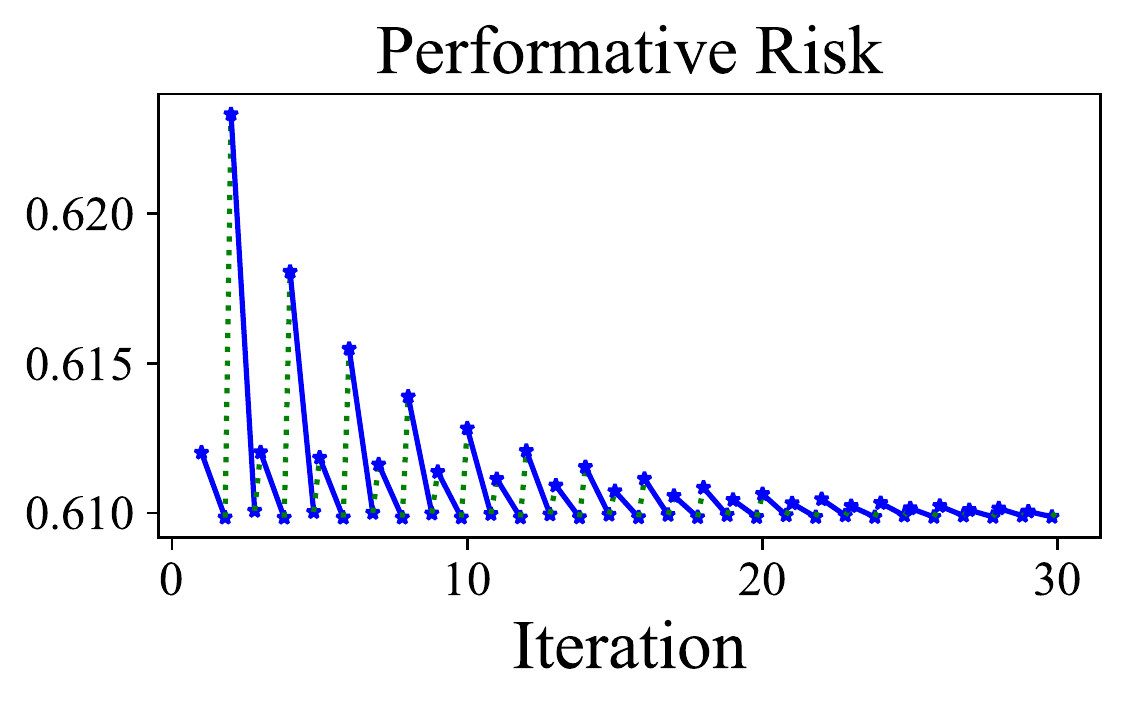}~~~
        \includegraphics[width=0.45\linewidth]{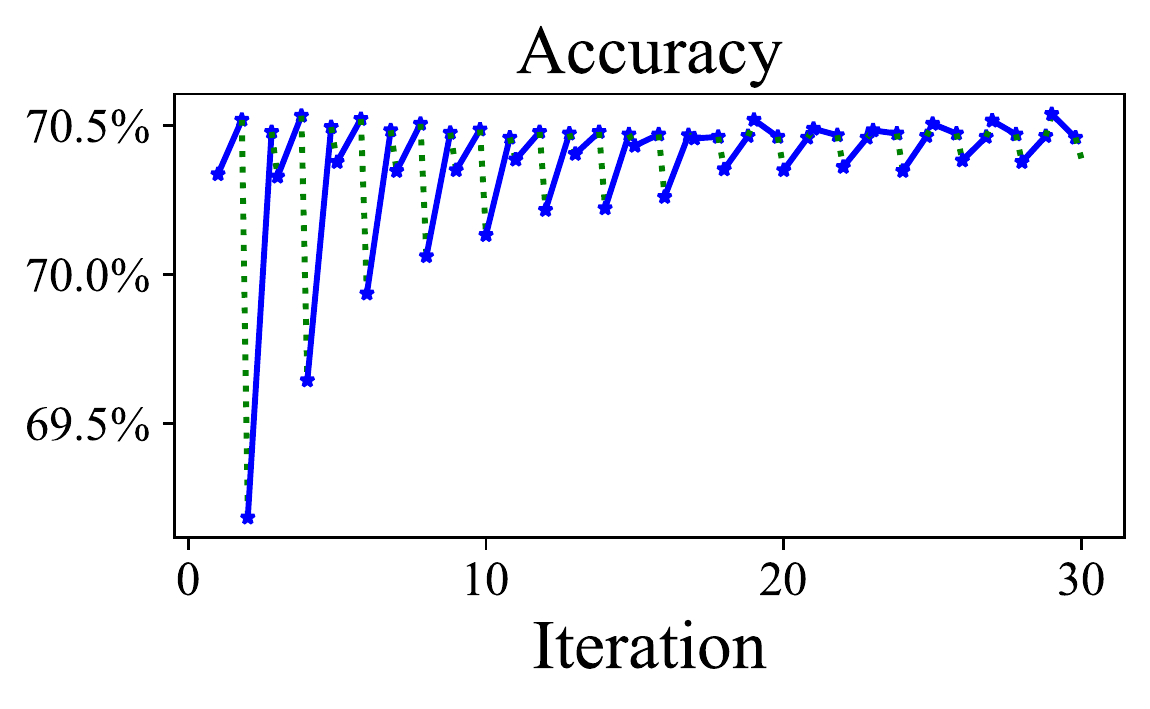}
\caption{Performative risk (left) and accuracy (right) of the classifier $\theta_t$ at different stages of RRM for $\epsilon=80$. Blue lines indicates the optimization phase and green lines indicate the effect of the distribution shift after the classifier deployment.  }
\label{fig:zigzag}
\end{figure}

The evolution of the performative risk during the RRM optimization is illustrated in Figure~\ref{fig:zigzag}. We evaluate $\PR(\theta)$ at the beginning and at the end of each optimization round and indicate the effect due to distribution shift with a dashed green line. We also verify that  the surrogate loss is a good proxy for classification accuracy in the performative setting.

\paragraph{Repeated gradient descent.} In the case of RGD, we find similar behavior to that of RRM. While the iterates again converge linearly, they naturally do so at a slower rate than in the exact minimization setting, given that each iteration consists only of a single gradient step. 
Again, we can see in Figure~\ref{fig:minimization} that the iterates converge for small values of $\epsilon$ and diverge for large values.

% !TEX root = main.tex

\section{Discussion and Future Work}

Our work draws attention to the fundamental problem of performativity in statistical learning and decision-making. Performative prediction enjoys a clean formal setup that we introduced, drawing on elements from causality and game theory.

Retraining is often considered a nuisance intended to cope with distribution shift. In contrast, our work interprets retraining as the natural equilibrating dynamic for performative prediction. The fixed points of retraining are performative stable points. Moreover, retraining converges to such stable points under natural assumptions, including strong convexity of the loss function. It is interesting to note that (weak) convexity alone is not enough. Performativity thus gives another intriguing perspective on why strong convexity is desirable in supervised learning.

Several interesting questions remain. For example, by letting the step size of repeated gradient descent tend to $0$, we see that this procedure converges for $\epsilon < \frac{\gamma}{\beta + \gamma}$. Exact repeated risk minimization, on the other hand, provably converges for every $\epsilon <\frac{\gamma}{\beta}$, and we showed this inequality is tight. It would be interesting to understand whether this gap is a fundamental difference between both procedures or an artifact of our analysis.

Lastly, we believe that the tools and ideas from performative prediction can be used to make progress in other subareas of machine learning. For example, in this paper, we have illustrated how reframing strategic classification as a performative prediction problem leads to a new understanding of when retraining overcomes strategic effects. However, we view this example as only scratching the surface of work connecting performative prediction with other fields.

In particular, reinforcement learning can be thought of as a case of performative prediction. In this setting, the choice of policy $f_\theta$, affects the distribution $\D(\theta)$ over $z = \{(s_h, a_h)\}_{h=1}^\infty$, the set of visited states, $s$, and actions, $a$, in a Markov Decision Process. Building off this connection, we can reinterpret repeated risk minimization as a form of off-policy learning in which an agent first collects a batch of data under a particular policy $f_\theta$, and then finds the optimal policy for that trajectory offline. We believe that some of the ideas developed in the context of performative prediction can shed new light on when these off-policy methods can converge.

\section*{Acknowledgements}
We wish to acknowledge support from the U.S. National Science Foundation Graduate Research Fellowship Program and the Swiss National Science Foundation Early Postdoc Mobility Fellowship Program.

\bibliographystyle{plain}
\bibliography{refs}

\newpage
\appendix
% !TEX root = main.tex

\section{Applications of performativity}

To illustrate the fact that performativity is a common cause for concept drift, we review a table of concepts drift applications from \cite{zliobaite2016}. In Table \ref{tab:my-table}, we highlight those settings that naturally occur due to performativity. Below we briefly discuss the role of performativity in such applications.

\begin{table}[h]
\small
\centering
\makebox[\linewidth]{
\begin{tabular}{|l|l|l|l|}
\hline
\backslashbox{\textbf{Indust.}}{\textbf{Appl.}}                                                                                                                 &\textbf{Monitoring \& control}                                                                                                          & \textbf{Information management}                                                                                                                                                    & \textbf{Analytics \& diagnostics}                                                                                          \\ \hline
\textbf{\begin{tabular}[c]{@{}l@{}}Security, \\ Police\end{tabular}}                                                              & \cellcolor[HTML]{E4E9F9}\begin{tabular}[c]{@{}l@{}}fraud detection,\\  insider trading detection, \\ adversary actions detection\end{tabular} & \cellcolor[HTML]{E4E9F9}\begin{tabular}[c]{@{}l@{}}next crime place \\ prediction\end{tabular}                                                                                     & \begin{tabular}[c]{@{}l@{}}crime volume \\ prediction\end{tabular}                                                          \\ \hline
\textbf{\begin{tabular}[c]{@{}l@{}}Finance,\\ Banking,\\ Telecom,\\ Insurance,\\ Marketing,\\ Retail,\\ Advertising\end{tabular}} & \begin{tabular}[c]{@{}l@{}}monitoring \& management \\ of customer segments, \\ bankruptcy prediction\end{tabular}                            & \cellcolor[HTML]{E4E9F9}\begin{tabular}[c]{@{}l@{}}product or service\\  recommendation, \\ including complimentary, \\ user intent or information \\ need prediction\end{tabular} & \begin{tabular}[c]{@{}l@{}}demand prediction, \\ response rate \\ prediction, budget \\ planning\end{tabular}               \\ \hline
\textbf{\begin{tabular}[c]{@{}l@{}}Production\\ industry\end{tabular}}                                                            & controlling output quality                                                                                                                    & -                                                                                                                                                                                  & predict bottlenecks                                                                                                         \\ \hline
\textbf{\begin{tabular}[c]{@{}l@{}}Education \\(e-Learning,\\ e-Health), \\ Media, \\ Entertainment\end{tabular}}                & \cellcolor[HTML]{E4E9F9}\begin{tabular}[c]{@{}l@{}}gaming the system,\\  drop out prediction\end{tabular}                                     & \cellcolor[HTML]{E4E9F9}\begin{tabular}[c]{@{}l@{}}music, VOD, movie, \\ news, learning object \\ personalized search\\  \& recommendations\end{tabular}                           & \cellcolor[HTML]{E4E9F9}\begin{tabular}[c]{@{}l@{}}player-centered game\\ design, learner-centered\\ education\end{tabular} \\ \hline
\end{tabular}}
\caption{Table of concept drift applications from \v{Z}liobait{\.{e}} et al. \cite{zliobaite2016}}
\label{tab:my-table}
\end{table}

The role of \emph{fraud detection} systems is to predict whether an instance such as a transaction or email is legitimate or not. It is well-known that designers of such fraudulent instances adapt to the fraud detection system in place in order to breach security \cite{bolton2002statistical}. Therefore, the deployment of fraud detection systems shapes the features of fraudulent instances.

\emph{Crime place prediction}, sometimes referred to as predictive policing \cite{feedbackPredictivePolicing, kubeAllocation, predictAndServe}, uses historical data to estimate the likelihood of crime at a given location. Those locations where criminal behavior is deemed likely by the system typically get more police patrols and better surveillance which act on the one hand to deter crime. These actions resulting from prediction significantly decrease the probability of crime taking place, thus changing the data used for future predictions.

In \emph{personalized recommendations}, instances are recommended to a user based on their historical context, such as their ratings or purchases. The set of recommendations thus depends on the trained machine learning model, which in turn changes the user's future ratings or purchases \cite{bottou2013counterfactual}. In other words, user features serving as input to a recommender inevitably depend on the previously used recommendation mechanisms.

In \emph{online two-player games}, it is common to request an AI opponent. The level of sophistication of the AI opponent might be chosen depending on the user's success history in the given game, with the goal of making the game appropriately challenging. This choice of AI opponent changes players' future success profiles, again causing a distribution shift in the features serving as an input to the prediction system.

\emph{Gaming the system} falls under the umbrella of strategic classification, which we discuss in detail in Section \ref{sec:exp}, so we avoid further discussion in this section.

\section{Experiments}

\subsection{Visualizing the performative risk and trajectory of RRM}

\definecolor{brass}{rgb}{0.71, 0.65, 0.26}

\begin{figure}[h]
        \centering
        \subfigure[$\epsilon=25$]{\includegraphics[width=0.8\linewidth]{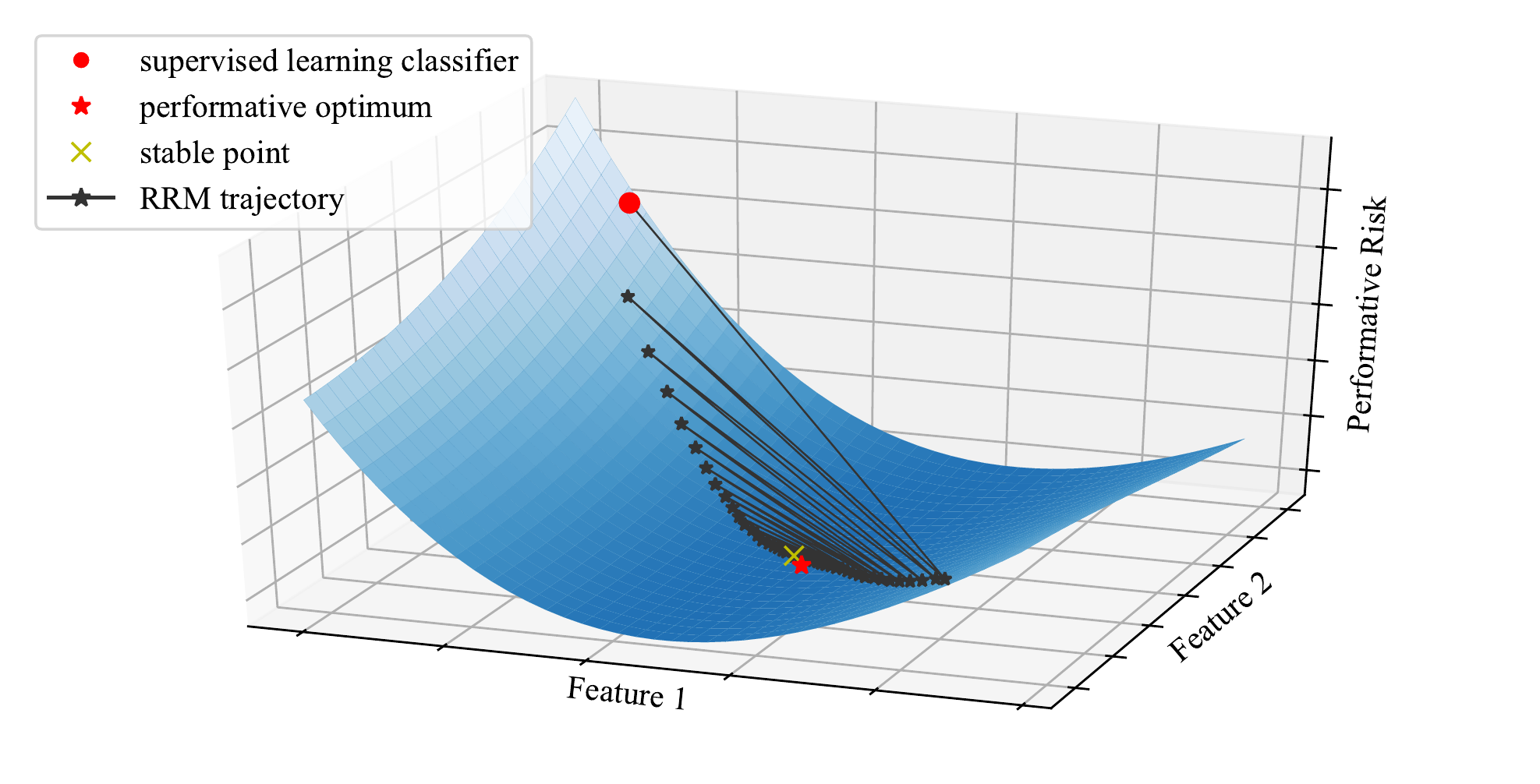}}
\subfigure[$\epsilon=100$]{ \includegraphics[width=0.8\linewidth]{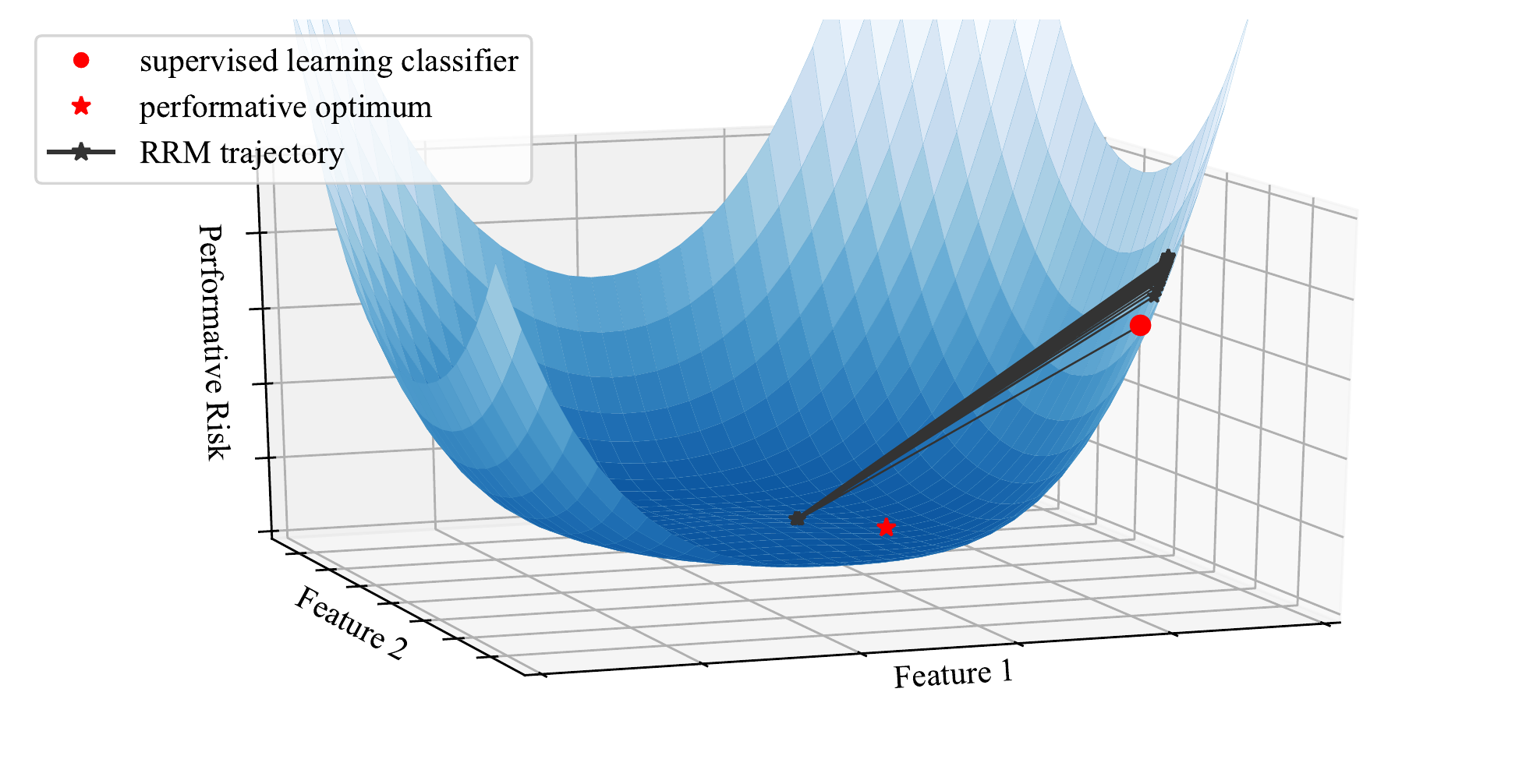}}
\caption{Performative risk surface and trajectory of repeated risk minimization for two different values of sensitivity parameter $\epsilon$. The initial iterate is the risk minimizer on the base dataset ($\color{red}\bullet$). We mark the performative optimum ($\color{red}\star$) and performatively stable point ($\color{brass}\boldsymbol\times$).}
\label{fig:trajectory}
\end{figure}

We provide additional experimental results in which we visualize the trajectory of repeated risk minimization on the surface of the performative risk. We adopt the general setting of Section \ref{sec:exp}. However, to properly visualize the loss, we rerun the experiments on a reduced version of the dataset with only two features (i.e $x \in \R^2$), both of which are adapted strategically according to the update described in Section \ref{sec:exp}. 

Figure~\ref{fig:trajectory} plots the performative risk surface, together with the trajectory of RRM given by straight black lines. The top plot shows the trajectory for a suitably small sensitivity parameter $\epsilon$. We see that RRM converges to a stable point which is close to the performative optimum. 
We contrast this behavior with that of RRM when $\epsilon$ is large in the bottom plot. Here, we observe that the iterates oscillate and that the algorithm fails to converge. 
Both plots mark the risk minimizer on the initial data set ($\color{red}\bullet$), before any strategic adaptation takes place. This point also corresponds to the initial iterate of RRM $\theta_0$. We additionally mark the performative optimum ($\color{red}\star$) on the risk curve. The top plot additionally marks the last iterate of RRM, which serves as a proxy for the performatively stable point ($\color{brass}\boldsymbol\times$). As predicted by our theory, this stable point is in a small neighborhood around the performative optimum.

\subsection{Experimental details}
\label{sec:exp_details}

\paragraph{Base distribution.} The base distribution consists of the Kaggle data set \cite{creditdata}. We subsample $n=18,357$ points from the original training set such that both classes are approximately balanced (45\% of points have $y$ equal to 1). There are a total of 10 features, 3 of which we treat as strategic features: utilization of credit lines, number of open credit lines, and number of real estate loans. We scale features in the base distribution so that they have zero mean and unit variance.

\paragraph{Verifying $\epsilon$-sensitivity.}
We verify that the map $\D(\cdot)$, as described in Section \ref{sec:exp}, is $\epsilon$-sensitive. To do so, we analyze $W_1(\D(\theta),\D(\theta'))$, for arbitrary $\theta,\theta'\in\Theta$. Fix a sample point $x\in\R^{m-1}$ from the base dataset. Because the base distribution $\D$ is supported on $n$ points, we can upper bound the optimal transport distance between any pair of distributions $\D(\theta)$ and $\D(\theta')$ by the Euclidean distance between the shifted versions of $x$ in $\D(\theta)$ and $\D(\theta')$. 

In our construction, the point $x$ is shifted to $x - \epsilon \theta$ and to $x-\epsilon\theta'$ in $\D(\theta)$ and $\D(\theta')$ respectively. The distance between these two shifted points is $\|x - \epsilon \theta - x + \epsilon \theta'\|_2 = \epsilon\|\theta - \theta'\|_2$. Since the same relationship holds for all other samples $x$ in the base dataset, the optimal transport from $\D(\theta)$ to $\D(\theta')$ is at most $\epsilon\|\theta - \theta'\|_2$.

\paragraph{Verifying joint smoothness of the objective.}

For the experiments described in Figure \ref{fig:minimization}, we run repeated risk minimization and repeated gradient descent on the logistic loss with $\ell_2$ regularization:
\begin{equation}
\label{eq:loss_objective}
	\frac{1}{n}\sum_{i=1}^n -y_i \theta^\top x_i  + \log\big(1 + \exp( \theta^\top x_i)\big) + \frac{\gamma}{2}\norm{\theta}^2
\end{equation}

For both the repeated risk minimization and repeated gradient descent we set $\gamma = 1000/n$, where $n$ is the size of the base dataset.

For a particular feature-outcome pair $(x_i,y_i)$, the logistic loss is $\frac{1}{4}\norm{x_i}^2$ smooth \cite{understandingML}. Therefore, the entire objective is $\frac{1}{4n}\sum_{i=1}^n \norm{x_i}^2 + \gamma$ smooth. Due to the strategic updates, $x_{BR} = x - \epsilon \theta$, the norm of individual features change depending on the choice of model parameters. 

Theoretically, we can upper bound the smoothness of the objective by finding the implicit constraints on $\Theta$, which can be revealed by looking at the dual of the objective function for every fixed value of $\epsilon$. However, for simplicity, we simply calculate the worst-case smoothness of the objective, given the trajectory of iterates $\{\theta_t\}$, for every fixed $\epsilon$. 

Furthermore, we can verify the logistic loss is jointly smooth. For a fixed example $z=(x,y)$, the gradient of the regularized logistic loss with respect to $\theta$ is,

\[
\nabla_\theta \ell(z;\theta)= y x + \frac{\exp(\theta^\top x)}{1 + \exp(\theta^\top x)}x  + \gamma \theta,
\]
which is 2-Lipschitz in $z$ due to $y\in\{0,1\}$. Hence, the overall objective is $\beta$-jointly smooth with parameter $$\beta = \max\big\{2, \frac{1}{4n}\sum_{i=1}^n \norm{x_i}^2 + \gamma\big\}.$$

For RRM, $\epsilon$ is less than $\frac{\gamma}{\beta}$ only in the case that $\epsilon=0.01$. For RGD, $\epsilon$ is never smaller than the theoretical cutoff of $\frac{\gamma}{(\beta + \gamma)(1 + 1.5 \eta \beta)}$.

\paragraph{Optimization details.}

The definition of RRM requires exact minimization of the objective at every iteration. We approximate this requirement by minimizing the objective described in expression \eqref{eq:loss_objective} to small tolerance, $10^{-8}$, using gradient descent. We choose the step size at every iteration using backtracking line search.

In the case of repeated gradient descent, we run the procedure as described in Definition \ref{def:rgd} with a fixed step size of $\eta = \frac{2}{\beta + \gamma}$.

\section{Auxiliary lemmas}

\begin{lemma}[First-order optimality condition]
\label{lemma:first_order_opt_condition}
	Let $f$ be convex and let $\Omega$ be a closed convex set on which $f$ is differentiable, then 
	\[
	x_* \in \argmin_{x \in \Omega} f(x)
	\]
	if and only if 
	\[
	\nabla f(x_*)^T(y- x_*) \geq 0, \quad \forall y \in \Omega.
	\]
\end{lemma}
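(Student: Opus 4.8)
The plan is to prove this standard first-order optimality condition in both directions, using only convexity and differentiability of $f$ on the closed convex set $\Omega$.

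\textbf{Sufficiency ($\Leftarrow$).} Suppose $\nabla f(x_*)^T(y-x_*) \geq 0$ for all $y\in\Omega$. By convexity of $f$, for any $y\in\Omega$ we have the subgradient-type inequality $f(y) \geq f(x_*) + \nabla f(x_*)^T(y-x_*)$. Combining this with the hypothesis immediately gives $f(y) \geq f(x_*)$, so $x_* \in \argmin_{x\in\Omega} f(x)$. This direction does not even need $\Omega$ to be convex.

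\textbf{Necessity ($\Rightarrow$).} Suppose $x_* \in \argmin_{x\in\Omega} f(x)$ and fix an arbitrary $y\in\Omega$. Since $\Omega$ is convex, the point $x_t := x_* + t(y-x_*) = (1-t)x_* + ty$ lies in $\Omega$ for every $t\in[0,1]$. Because $x_*$ is a minimizer, $f(x_t) \geq f(x_*)$ for all such $t$, hence the function $g(t) := f(x_t)$ satisfies $\frac{g(t)-g(0)}{t} \geq 0$ for all $t\in(0,1]$. Letting $t\to 0^+$ and using differentiability of $f$ at $x_*$ (so that the directional derivative exists and equals $\nabla f(x_*)^T(y-x_*)$ by the chain rule), we obtain $\nabla f(x_*)^T(y-x_*) = g'(0^+) \geq 0$. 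Since $y\in\Omega$ was arbitrary, this is the claimed inequality.

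\textbf{Main obstacle.} There is no real obstacle here; this is a textbook variational inequality. The only point requiring a little care is making the necessity direction rigorous: one must invoke convexity of $\Omega$ to guarantee the segment $[x_*,y]$ stays feasible, and invoke differentiability of $f$ to pass from the difference quotient to the directional derivative $\nabla f(x_*)^T(y-x_*)$. I would state the one-dimensional restriction $g(t)=f(x_*+t(y-x_*))$ explicitly so that the limit argument is transparent, and otherwise keep the write-up to a few lines.
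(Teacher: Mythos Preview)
Your proof is correct and complete. The paper itself states this lemma as a standard auxiliary result and does not supply a proof, so there is nothing to compare against; your two-direction argument (convexity for sufficiency, the one-dimensional restriction $g(t)=f(x_*+t(y-x_*))$ plus differentiability for necessity) is the textbook proof and is exactly what one would expect here.
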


\begin{lemma}[Bubeck, 2015 \cite{bubeck2015convex}, Lemma 3.11]
\label{lemma:smooth_strongly_convex}
Let $f:\R^d\rightarrow\R$ be $\beta$-smooth and $\gamma$-strongly convex, then for all $x,y$ in $\R^d$, 
\[
(\nabla f(x) - \nabla f(y))^\top (x - y) \geq \frac{\gamma \beta}{\gamma + \beta}\norm{x-y}^2 + \frac{1}{\gamma + \beta}\norm{\nabla f(x) - \nabla f(y)}^2.
\]
	
\end{lemma}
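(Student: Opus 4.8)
The plan is to reduce the $\gamma$-strongly convex case to the purely convex case by a standard quadratic shift, and then invoke the co-coercivity inequality for smooth convex functions. Concretely, I would define $g(x) \defeq f(x) - \frac{\gamma}{2}\norm{x}^2$. Since $f$ is $\gamma$-strongly convex, $g$ is convex; and since $f$ is $\beta$-smooth, $g$ is $(\beta-\gamma)$-smooth. If $\beta=\gamma$ then $g$ is affine, so $\nabla f(x)-\nabla f(y)=\gamma(x-y)$ and the claimed bound holds with equality; hence I may assume $\beta>\gamma$.

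The key ingredient is the co-coercivity of the gradient of a convex $L$-smooth function: for such a $g$ with $L=\beta-\gamma$,
\[
(\nabla g(x) - \nabla g(y))^\top (x-y) \geq \frac{1}{\beta-\gamma}\norm{\nabla g(x) - \nabla g(y)}^2 .
\]
I would establish this by applying, to each of the convex $L$-smooth functions $\phi_x(z)=g(z)-\nabla g(x)^\top z$ and $\phi_y(z)=g(z)-\nabla g(y)^\top z$ (which are minimized at $x$ and $y$ respectively, since their gradients vanish there), the one-gradient-step bound $\phi(z^*)\leq \phi(z)-\frac{1}{2L}\norm{\nabla\phi(z)}^2$ that follows from the smoothness upper bound evaluated at $z-\frac{1}{L}\nabla\phi(z)$. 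Adding the two resulting inequalities (for $\phi_x$ at $z=y$ and $\phi_y$ at $z=x$) cancels the $g(x)+g(y)$ terms and, after rearranging, yields precisely the co-coercivity estimate above.

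Finally, I would substitute $\nabla g(x)=\nabla f(x)-\gamma x$, so that $\nabla g(x)-\nabla g(y)=(\nabla f(x)-\nabla f(y))-\gamma(x-y)$. Writing $u=\nabla f(x)-\nabla f(y)$ and $v=x-y$, the co-coercivity inequality reads $(u-\gamma v)^\top v \geq \frac{1}{\beta-\gamma}\norm{u-\gamma v}^2$. Expanding both sides, multiplying through by $\beta-\gamma>0$, and collecting the $u^\top v$, $\norm{u}^2$, and $\norm{v}^2$ terms, the mixed terms combine to $(\beta+\gamma)\,u^\top v$ on the left while the right-hand side becomes $\norm{u}^2+\gamma\beta\norm{v}^2$; dividing by $\beta+\gamma$ gives exactly
\[
(\nabla f(x)-\nabla f(y))^\top(x-y) \geq \frac{\gamma\beta}{\gamma+\beta}\norm{x-y}^2 + \frac{1}{\gamma+\beta}\norm{\nabla f(x)-\nabla f(y)}^2 .
\]

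The only nonroutine step is the co-coercivity lemma; everything after it is bookkeeping. The main obstacle is therefore to present co-coercivity cleanly (or cite it), since the gradient substitution and the quadratic rearrangement that follow are purely mechanical. A minor point worth handling explicitly is the degenerate case $\beta=\gamma$, where the reduction produces an affine $g$ and the bound is attained with equality.
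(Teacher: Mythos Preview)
Your proof is correct, and it is in fact the standard argument that appears in the cited reference (Bubeck, 2015, Lemma~3.11): subtract the $\frac{\gamma}{2}\norm{\cdot}^2$ quadratic to reduce to the convex $(\beta-\gamma)$-smooth case, invoke co-coercivity of the gradient, and then undo the substitution via elementary algebra. The paper itself does not supply a proof of this lemma---it merely states it as an auxiliary result with a citation---so there is nothing to compare against beyond noting that your derivation matches the one in the source being cited.
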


\begin{lemma}[Kantorovich-Rubinstein]
\label{lemma:duality} A distribution map $\D(\cdot)$ is $\epsilon$-sensitive if and only if for all $\theta,\theta'\in\Theta$:
\begin{equation*}
\sup \left\{\Big|~ \E_{Z\sim \D(\theta)}g(Z) - \E_{Z\sim \D(\theta')}g(Z) ~\Big| \leq \epsilon\|\theta -\theta'\|_2~:~g:\R^p\rightarrow\R, ~g \text{ 1-Lipschitz}\right\}.
\end{equation*}
\end{lemma}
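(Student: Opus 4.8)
The statement is the Kantorovich--Rubinstein duality applied to the pairs $\D(\theta),\D(\theta')$, so the plan is to reduce it to that classical theorem. First observe that both sides of the claimed equivalence carry the same universal quantifier over $\theta,\theta'\in\Theta$ and are compared against the same bound $\epsilon\|\theta-\theta'\|_2$; hence it suffices to establish, for arbitrary Borel probability measures $\mu=\D(\theta),\nu=\D(\theta')$ on $\R^m$ (with finite first moments, so that all quantities below are well defined and finite), the identity
\[
W_1(\mu,\nu)\;=\;\sup\bigl\{\,\bigl|\E_{Z\sim\mu}g(Z)-\E_{Z\sim\nu}g(Z)\bigr|\;:\;g\ \text{$1$-Lipschitz}\,\bigr\}.
\]
Granting this, the definition of $\epsilon$-sensitivity, $W_1(\D(\theta),\D(\theta'))\le\epsilon\|\theta-\theta'\|_2$ for all $\theta,\theta'$, and the displayed condition of the lemma become the same statement, which proves both implications simultaneously.

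For the inequality ``$\ge$'' I would argue directly: for any coupling $\pi$ of $\mu$ and $\nu$ and any $1$-Lipschitz $g$, one has $\E_\mu g-\E_\nu g=\int(g(x)-g(y))\,\mathrm{d}\pi(x,y)\le\int\|x-y\|_2\,\mathrm{d}\pi(x,y)$, so taking the infimum over couplings $\pi$ gives $\E_\mu g-\E_\nu g\le W_1(\mu,\nu)$; applying the same bound to $-g$ and then taking the supremum over $g$ yields ``$\ge$''. The real content is the reverse inequality, i.e.\ attainment of the dual value. I would obtain it from Kantorovich duality for the lower semicontinuous metric cost $c(x,y)=\|x-y\|_2$, which supplies the no-duality-gap identity $W_1(\mu,\nu)=\sup\{\int\phi\,\mathrm{d}\mu+\int\psi\,\mathrm{d}\nu:\phi(x)+\psi(y)\le\|x-y\|_2\ \forall x,y\}$, followed by a $c$-transform reduction: replacing any feasible pair $(\phi,\psi)$ by $(\phi^{cc},-\phi^{cc})$ keeps feasibility, does not decrease the objective, and — because $c$ is a metric — forces $\phi^{cc}$ to be $1$-Lipschitz. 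Hence the dual value equals $\sup_{g\ \text{$1$-Lip}}(\E_\mu g-\E_\nu g)\le\sup_{g\ \text{$1$-Lip}}|\E_\mu g-\E_\nu g|$, which together with the previous paragraph gives the identity.

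The main obstacle is the single ingredient I would not reprove from scratch: the absence of a duality gap in the Monge--Kantorovich problem, whose self-contained proof needs a Hahn--Banach or minimax argument over an appropriate function space. Since the metric-cost case is precisely the classical Kantorovich--Rubinstein theorem, the honest route is to cite it (e.g.\ Villani~\cite{villani,villani2008}) and assemble the elementary pieces above. The one bookkeeping point along the way is the finite-first-moment caveat, needed so that $\E_\mu g$ is defined for unbounded $1$-Lipschitz test functions $g$; it is harmless, since one may first restrict to bounded $1$-Lipschitz functions — for which all expectations are finite — and then pass to the supremum over truncations.
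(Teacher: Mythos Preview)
Your proposal is correct, and in fact goes well beyond what the paper does: the paper states this lemma as the classical Kantorovich--Rubinstein theorem and provides no proof at all, treating it as a known auxiliary result (with the references \cite{villani,villani2008} already in the bibliography). Your reduction to the standard duality theorem, together with the elementary coupling argument for one inequality and the $c$-transform argument for the other, is exactly the textbook route; the paper simply takes that route for granted.
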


\begin{lemma}
\label{lemma:wasserstein_application}
Let $f:\R^n\rightarrow \R^d$ be an $L$-Lipschitz function, and let $X, X'\in\R^n$ be random variables such that $W_1(X,X')\leq C$. Then
$$\|\E[f(X)] -\E[f(X')]\|_2 \leq LC.$$
\end{lemma}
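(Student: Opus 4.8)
The plan is to reduce the vector-valued claim to a family of scalar claims by using the variational characterization of the Euclidean norm, and then to invoke Kantorovich--Rubinstein duality (Lemma~\ref{lemma:duality}) for each scalar projection.

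First I would write, using $\|v\|_2 = \sup_{\|u\|_2 \le 1} u^\top v$ and linearity of expectation,
\[
\big\|\E[f(X)] - \E[f(X')]\big\|_2 = \sup_{\|u\|_2 \le 1} \, u^\top\!\big(\E[f(X)] - \E[f(X')]\big) = \sup_{\|u\|_2 \le 1} \, \big(\E[u^\top f(X)] - \E[u^\top f(X')]\big).
\]
Then I would fix a unit vector $u$ and consider the scalar function $g_u(x) \defeq u^\top f(x)$. By Cauchy--Schwarz and the $L$-Lipschitzness of $f$, $|g_u(x) - g_u(x')| \le \|u\|_2\,\|f(x) - f(x')\|_2 \le L\|x - x'\|_2$, so $g_u$ is $L$-Lipschitz and hence $g_u/L$ is $1$-Lipschitz.

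Next, applying Kantorovich--Rubinstein duality (Lemma~\ref{lemma:duality}) to the laws of $X$ and $X'$ with the $1$-Lipschitz test function $g_u/L$ gives $|\E[g_u(X)] - \E[g_u(X')]| \le L\cdot W_1(X,X') \le LC$. Taking the supremum over all $u$ with $\|u\|_2 \le 1$ in the displayed identity then yields $\|\E[f(X)] - \E[f(X')]\|_2 \le LC$, which is the claim.

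I do not expect a genuine obstacle here; the only points requiring a line of care are (i) checking that the expectations are well-defined, which follows because a Lipschitz $f$ has at most linear growth and the hypothesis $W_1(X,X') \le C < \infty$ forces $X,X'$ to have finite first moments, and (ii) observing that Lemma~\ref{lemma:duality}, although phrased in terms of a distribution map, is exactly the Kantorovich--Rubinstein identity applied to the two measures at hand, so it transfers verbatim to the pair of laws $(\mathrm{law}(X), \mathrm{law}(X'))$.
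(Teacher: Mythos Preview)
Your proof is correct and follows essentially the same approach as the paper: both reduce the vector-valued inequality to a scalar one by projecting onto a unit vector and then apply Kantorovich--Rubinstein duality to the $L$-Lipschitz function $u^\top f$. The only cosmetic difference is that the paper fixes the single optimizing direction $u = \frac{\E[f(X)] - \E[f(X')]}{\|\E[f(X)] - \E[f(X')]\|_2}$ and cancels a factor, whereas you take a supremum over all unit vectors---your version has the minor advantage of not needing to separately handle the degenerate case $\E[f(X)] = \E[f(X')]$.
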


\begin{proof}
\begin{align*}
	\|\E[f(X)] -\E[f(X')]\|_2^2 &= (\E[f(X)] -\E[f(X')])^\top (\E[f(X)] -\E[f(X')])\\
	&= \|\E[f(X)] -\E[f(X')]\|_2 \frac{(\E[f(X)] -\E[f(X')])^\top}{\|\E[f(X)] -\E[f(X')]\|_2} (\E[f(X)] -\E[f(X')]).
\end{align*}
Now define the unit vector $v:=\frac{\E[f(X)] -\E[f(X')]}{\|\E[f(X)] -\E[f(X')]\|_2}$. By linearity of expectation, we can further write
$$\|\E[f(X)] -\E[f(X')]\|_2^2 = \|\E[f(X)] -\E[f(X')]\|_2~ (\E[v^\top f(X)] - \E[v^\top f(X')]).$$
For any unit vector $v$ and $L$-Lipschitz function $f$, $v^\top f$ is a one-dimensional $L$-Lipschitz function, so we can apply Lemma \ref{lemma:duality} to obtain
$$\|\E[f(X)] -\E[f(X')]\|_2^2\leq \|\E[f(X)] -\E[f(X')]\|_2 LC.$$
Canceling out $\|\E[f(X)] -\E[f(X')]\|_2$ from both sides concludes the proof.
\end{proof}

\section{Proofs of main results}
\label{app:mainproofs}

\subsection{Proof of Theorem \ref{theorem:exact_min_strongly_convex}}
\label{app:proofexact}

Fix $\theta,\theta'\in\Theta$. Let $f(\varphi)=  \ploss{\theta}{\varphi}$ and $f'(\varphi)=  \ploss{\theta'}{\varphi}$. Since $f$ is $\gamma$-strongly convex and $G(\theta)$  is the unique minimizer of $f(x)$  we know that,
\begin{align}
f(G(\theta))-f(G(\theta'))&\geq \left(G(\theta)-G(\theta')\right)^\top\nabla f(G(\theta')) + \frac{\gamma}{2} \|G(\theta)-G(\theta')\|_2^2\\
f(G(\theta'))-f(G(\theta))&\geq \frac{\gamma}{2} \|G(\theta)-G(\theta')\|_2^2
\end{align}
Together, these two inequalities imply that 
\[
-\gamma \|G(\theta)-G(\theta')\|_2^2 \geq \left(G(\theta)-G(\theta')\right)^\top\nabla f(G(\theta')).
\]

Next, we observe that $(G(\theta) - G({\theta'}))^\top\nabla_\theta \ell(z; G(\theta'))$ is $\|G(\theta) - G({\theta'})\|_2 \beta$-Lipschitz in $z$. This follows from applying Cauchy-Schwarz and the fact that the loss is $\beta$-jointly smooth. Using the dual formulation of the optimal transport distance (Lemma \ref{lemma:duality}) and $\epsilon$-sensitivity of $\D(\cdot)$,
\[
(G(\theta) - G({\theta'}))^\top  \nabla f(G(\theta')) -  (G(\theta) - G({\theta'}))^\top \nabla f'(G(\theta')) \geq - \epsilon\beta\|G(\theta)-G({\theta'})\|_2\|\theta-\theta'\|_2.
\]
Furthermore, using the first-order optimality conditions for convex functions, we have $(G(\theta) - G({\theta'}))^\top  \nabla f'(G(\theta'))\geq 0$, and hence $(G(\theta) - G({\theta'}))^\top  \nabla f(G(\theta')) \geq - \epsilon\beta\|G(\theta)-G({\theta'})\|_2\|\theta-\theta'\|_2.$ Therefore, we conclude that,
\[
-\gamma\|G(\theta)-G(\theta')\|_2^2 \geq  - \epsilon\beta\|G(\theta)-G({\theta'})\|_2\|\theta-\theta'\|_2.
\]
Claim (a) then follows by rearranging. 

To prove claim (b) we note that  $\theta_t = G({\theta_{t-1}})$ by the definition of RRM, and $G({\thetaPS}) = \thetaPS$ by the definition of stability. Applying the result of part (a) yields
\begin{equation}\|\theta_{t} - \thetaPS\|_2 \leq \epsilon \frac{\beta}{\gamma} \|\theta_{t-1} - \thetaPS\|_2 \leq \left(\epsilon \frac{\beta}{\gamma}\right)^t \|\theta_{0} - \thetaPS\|_2.
\end{equation}
Setting this expression to be at most $\delta$ and solving for $t$ completes the proof of claim (b).

\subsection{Proof of Proposition \ref{prop:tightness}}
\label{app:prooftightness}

As for statement (a), we provide one counterexample for each of the statements (b) and (c).

\paragraph{Proof of (b):} Consider a type of regularized hinge loss $\ell(z;\theta) = C \max(-1,y\theta) + \frac{\gamma}{2} (\theta - 1)^2$, and suppose $\Theta \supseteq [-\frac{1}{2\epsilon}, \frac{1}{2\epsilon}]$.

Let the distribution of $Y$ according to $\D(\theta)$ be a point mass at $\epsilon \theta$, and let the distribution of $X$ be invariant with respect to $\theta$. Clearly, this distribution is $\epsilon$-sensitive.

 Let $\theta_0 = 2$. Then, by picking $C$ big enough, RRM prioritizes to minimize the first term exactly, and hence we get $\theta_1 = -\frac{1}{2\epsilon}$. In the next step, again due to large $C$, we get $\theta_2 = 2$. Thus, RRM keeps oscillating between $2$ and $-\frac{1}{2\epsilon}$, failing to converge. This argument holds for all $\gamma, \epsilon >0$.

\paragraph{Proof of (c):} Suppose that the loss function is the squared loss, $\ell(z;\theta) = (y-\theta)^2$, where $y,\theta\in\R$. Note that this implies $\beta = \gamma$. Let the distribution of $Y$ according to $\D(\theta)$ be a point mass at $1 + \epsilon \theta$, and let the distribution of $X$ be invariant with respect to $\theta$. This distribution family satisfies $\epsilon$-sensitivity, because
	$$W_1(\D(\theta), \D(\theta')) = \epsilon |\theta-\theta'|.$$
	  By properties of the squared loss, we know
	$$\argmin_{\theta'}  \map(\theta,\theta') = \E_{Z\sim \D(\theta)}\left[Y\right] = 1 +  \epsilon \theta.$$
It is thus not hard to see that RRM does not contract if $\epsilon\geq  \frac{\gamma}{\beta} = 1$:
$$|G(\theta) - G(\theta')| = \left|1 + \epsilon \theta - 1 - \epsilon \theta'\right| = \epsilon|\theta-\theta'|,$$
which exactly matches the bound of Theorem \ref{theorem:exact_min_strongly_convex} and proves the first statement of the proposition. The unique performatively stable point of this problem is $\theta$ such that $\theta = 1 + \epsilon\theta$, which is $\thetaPS =  \frac{1}{1 - \epsilon}$ for $\epsilon> 1$.  

For $\epsilon=1$, no performatively stable point exists, thereby proving the second claim of the proposition. If $\epsilon>1$ on the other hand, and $\theta_0\neq \thetaPS$, we either have $\theta_t \rightarrow \infty$ or $\theta_t \rightarrow - \infty$, because
$$\theta_{t} = 1 + \epsilon \theta_{t-1} = \sum_{k=0}^{t-1} \epsilon^k + \theta_0 \epsilon^t = \frac{\epsilon^t - 1}{\epsilon - 1} + \theta_0 \epsilon^t,$$
thus concluding the proof.

\subsection{Proof of Theorem \ref{theorem:contraction}}
\label{app:proofGD}

Since projecting onto a convex set can only bring two iterates closer together, in this proof we ignore the projection operator $\Pi_\Theta$ and treat $\gd$ as performing merely the gradient step.

We begin by expanding out $\norm{\gd(\theta) - \gd({\theta'})}^2$,
\begin{align*}
 \norm{\gd(\theta) - \gd({\theta'})}^2 &= \left\|\theta - \eta \lgrad{\theta}{\theta} - \theta' + \eta \lgrad{\theta'}{\theta'}\right\|_2^2\\
	&= \norm{\theta - \theta'}^2 - 2\eta (\theta-\theta')^\top\left(\lgrad{\theta}{\theta} - \lgrad{\theta'}{\theta'} \right)\\
	& \quad+ \eta^2 \left\|\lgrad{\theta}{\theta} - \lgrad{\theta'}{\theta'}\right\|_2^2\\
	&\defeq T_1 -2\eta T_2 + \eta^2 T_3.
\end{align*}
Next, we analyze each term individually,
\begin{align*}
	T_1 &\defeq \norm{\theta - \theta'}^2,\\
	T_2 &\defeq (\theta-\theta')^\top\left(\lgrad{\theta}{\theta} - \lgrad{\theta'}{\theta'} \right),\\
	T_3 &\defeq \norm{\lgrad{\theta}{\theta} - \lgrad{\theta'}{\theta'}}^2.
\end{align*}
We start by lower bounding $T_2$:
\begin{align*}
	T_2 &= (\theta-\theta')^\top\left(\lgrad{\theta}{\theta} - \lgrad{\theta'}{\theta}+ \lgrad{\theta'}{\theta} - \lgrad{\theta'}{\theta'} \right)\\
	&= (\theta-\theta')^\top\left(\lgrad{\theta}{\theta} - \lgrad{\theta'}{\theta}\right) + (\theta-\theta')^\top\left(\lgrad{\theta'}{\theta} - \lgrad{\theta'}{\theta'} \right)\\
	&\geq -\|\theta-\theta'\|_2 ~\|\lgrad{\theta}{\theta} - \lgrad{\theta'}{\theta}\|_2 + (\theta-\theta')^\top\left(\lgrad{\theta'}{\theta} - \lgrad{\theta'}{\theta'} \right),
\end{align*}
where in the last step we apply the Cauchy-Schwarz inequality.
By smoothness, $\nabla_\theta \ell(Z; \theta)$ is $\beta$-Lipschitz in $Z$. Together with the fact that $Z$ is $\epsilon$-sensitive, we can lower bound the first term in the above expression by applying Lemma \ref{lemma:wasserstein_application}, which results in $-\beta \epsilon \norm{\theta - \theta'}^2$. 

We apply Lemma \ref{lemma:smooth_strongly_convex} to lower bound the second term by 
\begin{align*}
& (\theta-\theta')^\top\left(\lgrad{\theta'}{\theta} - \lgrad{\theta'}{\theta'} \right) \\
\geq~ &\frac{\beta \gamma}{\beta + \gamma}\norm{\theta - \theta'}^2 + \frac{1}{\beta +\gamma} \E_{Z \sim \D(\theta')}\left[\norm{\nabla_\theta \ell(Z; \theta) - \nabla_\theta \ell(Z; \theta')}^2\right] \\ 
 \geq~ &\frac{\beta \gamma}{\beta + \gamma}\norm{\theta - \theta'}^2 + \frac{1}{\beta +\gamma} \left\|\E_{Z \sim \D(\theta')}\left[\nabla_\theta \ell(Z; \theta) - \nabla_\theta \ell(Z; \theta')\right]\right\|_2^2, 
\end{align*}
where we have applied Jensen's inequality in the last line. Putting everything together, we get
\[
T_2 \geq \left(\frac{\beta \gamma}{\beta + \gamma}  - \beta \epsilon\right)\norm{\theta - \theta'}^2 + \frac{1}{\beta +\gamma} \left\|\E_{Z \sim \D(\theta')}\left[\nabla_\theta \ell(Z; \theta) - \nabla_\theta \ell(Z; \theta')\right]\right\|_2^2.
\]
Now we upper bound $T_3$. We begin by expanding out the square just as before, 
\begin{align}
\begin{split}
\label{eq:t3}
T_3 &= \left\|\lgrad{\theta}{\theta} - \lgrad{\theta'}{\theta} + \lgrad{\theta'}{\theta} - \lgrad{\theta'}{\theta'}\right\|^2\\
& = \left\|\lgrad{\theta}{\theta} - \lgrad{\theta'}{\theta}\right\|_2^2 + \left\|\lgrad{\theta'}{\theta} - \lgrad{\theta'}{\theta'}\right\|_2^2	\\
&\quad+ 2\left(\lgrad{\theta}{\theta} - \lgrad{\theta'}{\theta}\right)^\top\left(\lgrad{\theta'}{\theta} - \lgrad{\theta'}{\theta'}\right).
\end{split}
\end{align}
We again bound each term individually. By the smoothness of the loss and Lemma \ref{lemma:wasserstein_application}, 
\[
\left\|\lgrad{\theta}{\theta} - \lgrad{\theta'}{\theta}\right\|_2^2 \leq \beta^2\epsilon^2 \|\theta-\theta'\|_2^2.
\]
Moving on to the last term in \eqref{eq:t3}:
\begin{align*}
 &2\left(\lgrad{\theta}{\theta} - \lgrad{\theta'}{\theta}\right)^\top\left(\lgrad{\theta'}{\theta} - \lgrad{\theta'}{\theta'}\right)\\
 \defeq~&2\left\|\lgrad{\theta'}{\theta} - \lgrad{\theta'}{\theta'}\right\|_2\left(\lgrad{\theta}{\theta} - \lgrad{\theta'}{\theta}\right)^\top v \\
 = ~&2\left\|\lgrad{\theta'}{\theta} - \lgrad{\theta'}{\theta'}\right\|_2 \left(\E_{Z \sim \D(\theta)} v^\top \nabla_\theta  \ell(Z;\theta) - \E_{Z \sim \D(\theta')}v^\top\nabla_\theta  \ell(Z;\theta)\right),
\end{align*}
where we define the unit vector $v\defeq \frac{\lgrad{\theta'}{\theta} - \lgrad{\theta'}{\theta'}}{ \norm{\lgrad{\theta'}{\theta} - \lgrad{\theta'}{\theta'}}}$. By smoothness of the loss, we can conclude that $v^\top\nabla_\theta  \ell(Z,\theta)$ is $\beta$-Lipschitz, so by $\epsilon$-sensitivity we get
\begin{align*}
 &2\left(\lgrad{\theta}{\theta} - \lgrad{\theta'}{\theta}\right)^\top\left(\lgrad{\theta'}{\theta} - \lgrad{\theta'}{\theta'}\right)\\
  \leq ~&2\left\|\lgrad{\theta'}{\theta} - \lgrad{\theta'}{\theta'}\right\|_2 \beta \epsilon \|\theta-\theta'\|_2 \\
  \leq ~&2 \beta^2 \epsilon \norm{\theta - \theta'}^2,
  \end{align*}
where in the last step we again apply smoothness. Hence, 
\[
T_3 \leq (\epsilon^2 \beta^2 + 2\beta^2\epsilon)\left\|\theta - \theta'\right\|_2^2 + \left\|\lgrad{\theta'}{\theta} - \lgrad{\theta'}{\theta'}\right\|_2^2.
\]
Having bounded all the terms, we now conclude that
\begin{align*}
	 \left\|\gd(\theta) - \gd({\theta'})\right\|_2^2 &\leq  \left(1 + \eta^2\epsilon^2 \beta^2 + 2\eta^2 \beta^2\epsilon -2\eta \frac{\beta \gamma}{\beta + \gamma}  + 2\eta \beta \epsilon\right)\norm{\theta - \theta'}^2 \\
	 & - \left(\frac{2\eta}{\beta + \gamma}  - \eta^2 \right) \left\|\lgrad{\theta'}{\theta} - \lgrad{\theta'}{\theta'}\right\|_2^2.
\end{align*}
If we take the step size $\eta$ to be small enough, namely $\eta\leq \frac{2}{\beta + \gamma}$, we get
\[
\left\|\gd(\theta) - \gd({\theta'})\right\|_2^2 \leq  \left(1 + \eta^2\epsilon^2 \beta^2 + 2\eta^2 \beta^2\epsilon -2\eta \frac{\beta \gamma}{\beta + \gamma}  + 2\eta \beta \epsilon\right) \norm{\theta - \theta'}^2.
\]
To ensure a contraction, we need $2\eta \frac{\beta\gamma}{\beta + \gamma} - \eta^2 \epsilon^2\beta^2 - 2\eta^2 \beta^2 \epsilon - 2\eta \beta \epsilon>0$. Canceling out $\eta\beta$, and assuming $\epsilon \leq 1$, it suffices to have $\frac{2\gamma}{\beta + \gamma} - 3\eta\epsilon\beta - 2\epsilon > 0$.
Therefore, if $\epsilon< \frac{\gamma}{(\beta + \gamma)(1 + 1.5 \eta\beta)}\leq 1$, the map $\gd$ is contractive. In particular, we have
\begin{align*}
\left\|\gd(\theta) - \gd({\theta'})\right\|_2 &\leq  \sqrt{\left(1 - \eta\left(2 \frac{\beta \gamma}{\beta + \gamma}  - \epsilon (3 \eta \beta^2 +  2\beta)\right)\right)} \norm{\theta - \theta'}\\
&\leq \left(1 - \eta\left( \frac{\beta \gamma}{\beta + \gamma}  - \epsilon (1.5\eta\beta^2 + \beta)\right)\right) \norm{\theta - \theta'},
\end{align*}
where we use the fact that $\sqrt{1 - x} \leq 1 - \frac{x}{2}$ for $x\in[0,1]$. This completes the proof of part (a).

Since we have shown $\gd$ is contractive, by the Banach fixed-point theorem we know that there exists a unique fixed point of $\gd$. That is, there exists $\thetaPS$ such that $\lgrad{\thetaPS}{\thetaPS}=0$. By convexity of the loss function, this means that $\thetaPS$ is the optimum of $\ploss{\thetaPS}{\theta}$ over $\theta$, which in turn implies that $\thetaPS$ is performatively stable. Recursively applying the result of part (a) we get the rate of convergence of RRM to $\thetaPS$:
\begin{align*}
\|\theta_t - \thetaPS\|_2 &\leq \left(1 - \eta\left( \frac{\beta \gamma}{\beta + \gamma}  - \epsilon (1.5\eta\beta^2 + \beta)\right)\right)^{t} \|\theta_0 - \thetaPS\|_2\\
&\leq \exp\left(- t\eta\left( \frac{\beta \gamma}{\beta + \gamma}  - \epsilon (1.5\eta\beta^2 + \beta)\right)\right) \|\theta_0 - \thetaPS\|_2,
\end{align*}
where in the last step we use the fact that $1-x\leq e^{-x}$. Setting this expression to be at most $\delta$ and solving for $t$ completes the proof.
\subsection{Proof of Theorem \ref{theorem:finite_samples}}

\paragraph{Proof of (a):}

We introduce the main proof idea and then present the full argument. The proof proceeds by case analysis. First, we show that if $\|\theta_t - \thetaPS\|_2 > \delta$, performing ERM ensures that with high probability $\|\theta_{t+1} - \thetaPS\|_2 \leq 2\epsilon \frac{\beta}{\gamma}\|\theta_{t} - \thetaPS\|_2$. Using our assumption that $\epsilon < \frac{\gamma}{2\beta}$, this implies that the iterate $\theta_{t+1}$ contracts toward $\thetaPS$.

On the other hand, if $\|\theta_t-\thetaPS\|_2\leq \delta$, we show that while ERM might not contract, it cannot push $\theta_{t+1}$ too far from $\thetaPS$ either. In particular, $\theta_{t+1}$ must be in a $\frac{\epsilon\beta}{2\gamma}\delta$-ball around $\thetaPS$. The proof then concludes by arguing that $\theta_t$ for $t\geq \frac{\log(\|\theta_0 - \thetaPS\|_2/\delta)}{\log(\gamma/2\epsilon\beta)}$ must enter a ball of radius $\delta$ around $\thetaPS$. Once this event occurs, no future iterate can exit the $\frac{\epsilon\beta}{2\gamma}\delta$-ball around $\thetaPS$.

\underline{Case 1:}\quad $\|\theta_t - \thetaPS\|_2> \delta$. If the current iterate is outside the ball, we show that with high probability the next iterate contracts towards a performatively stable point. In particular, 
\[
\norm{\theta_{t+1}- \thetaPS} \leq \frac{2\epsilon \beta}{\gamma}\norm{\theta_t - \thetaPS}.
\]
To prove this claim, we begin by showing that 
\begin{equation}
\label{eq:wasserstein_empirical}
W_1(\D^{n_t}(\theta_t),\D(\thetaPS)) \leq 2\epsilon \|\theta_t - \thetaPS\|_2, \text{ with probability } 1 - \frac{6p}{\pi^2t^2}.
\end{equation}
Since the $W_1$-distance is a metric on the space of distributions, we can apply the triangle inequality to get
$$W_1(\D^{n_t}(\theta_t),\D(\thetaPS))\leq W_1(\D^{n_t}(\theta_t), \D(\theta_t)) + W_1(\D(\theta_t), \D(\thetaPS)).$$
The second term is bounded deterministically by $\epsilon \norm{\theta_t - \thetaPS}$ due to $\epsilon$-sensitivity. By Theorem 2 of Fournier \& Guillin, 2015 \cite{Fournier2015}, for $n_t \geq \frac{1}{c_2(\epsilon \delta)^m}\log\left(\frac{t^2\pi^2 c_1}{6p}\right)$, the probability that the first term is greater than $\epsilon \delta$ is less that $\frac{6 p}{t^2 \pi^2}$. Here, the positive constants $c_1, c_2$ depend on $\alpha,\mu, \xi_{\alpha,\mu}$ and $m$. Therefore,  
$$W_1(\D^{n_t}(\theta_t),\D(\thetaPS)) \leq \epsilon \delta + \epsilon \norm{\theta_t - \thetaPS} \leq 2\epsilon \norm{\theta_t - \thetaPS}, \text{ with probability } 1 - \frac{6p}{\pi^2t^2}.$$
Using this, we can now prove that the iterates contract. Following the first steps of the proof of Theorem \ref{theorem:exact_min_strongly_convex}, we have that 
\begin{align}
	\begin{split}
	\label{main_eq_exact_minimization_samples} 
		&\left(G^{n_t}(\theta_t) - G(\thetaPS)\right)^\top\Big(\elgrad{\D^{n_t}(\theta_t)}{G^{n_t}(\theta_t)} - \elgrad{\D(\thetaPS)}{G^{n_t}(\theta_t)}\Big)\\
		+  &\left(G^{n_t}(\theta_t) - G(\thetaPS)\right)^\top\Big(\elgrad{
		\D(\thetaPS)}{G^{n_t}(\theta_t)} - \elgrad{\D(\thetaPS)}{G(\thetaPS)}\Big) \leq  0.
	\end{split}
\end{align}
Like in the proof of Theorem \ref{theorem:exact_min_strongly_convex}, the term $(G^{n_t}(\theta_t) - G(\thetaPS))^\top\elgrad{\D^{n_t}(\theta_t)}{G^{n_t}(\theta_t)}$ is $\norm{G^{n_t}(\theta_t) - G(\thetaPS)} \cdot \beta$ Lipschitz in $Z$. Using equation (\ref{eq:wasserstein_empirical}), with probability $1 - \frac{6p}{\pi^2t^2}$ we can bound the first term by 
\begin{align*}
&\left(G^{n_t}(\theta_t) - G(\thetaPS)\right)^\top\Big(\elgrad{\D^{n_t}(\theta_t)}{G^{n_t}(\theta_t)} - \elgrad{\D(\thetaPS)}{G^{n_t}(\theta_t)}\Big)\\
\geq~ &-2\epsilon \beta \left\|G^{n_t}(\theta_t) - G(\thetaPS)\right\|_2 \|\theta_t - \thetaPS\|_2.
\end{align*}
And by strong convexity, 
\begin{align*}
&\left(G^{n_t}(\theta_t) - G(\thetaPS)\right)^\top\Big(\elgrad{
		\D(\thetaPS)}{G^{n_t}(\theta_t)} - \elgrad{\D(\thetaPS)}{G(\thetaPS)}\Big)\\
		 \geq~ &\gamma \norm{G^{n_t}(\theta_t) - G(\thetaPS)}^2.
\end{align*}
Plugging back into equation (\ref{main_eq_exact_minimization_samples}), we conclude that with high probability
\[
\norm{\theta_{t+1}- \thetaPS} \leq \frac{2\epsilon \beta}{\gamma}\norm{\theta_t - \thetaPS}.
\]
Applying a union bound, we conclude that the iterates contract at every iteration where $\|\theta_t - \thetaPS\|_2> \delta$ with probability at least $1 - \sum_{t=1}^\infty \frac{6p}{\pi^2t^2} = 1 - p$. Therefore, for  $t \geq \left(1 - \frac{2\epsilon \beta}{\gamma}\right)^{-1}\log\left(\frac{\|\theta_0 - \thetaPS\|_2}{\delta}\right)$ steps we have 
$$\norm{\theta_t - \thetaPS} \leq \left(\frac{2\epsilon\beta}{\gamma}\right)^t\norm{\theta_0 - \thetaPS} \leq  \left(\frac{2\epsilon\beta}{\gamma}\right)^t \|\theta_0 - \thetaPS\|_2 \leq \exp\left(-t \left(1 - \frac{2\epsilon\beta}{\gamma}\right)\right) \|\theta_0 - \thetaPS\|_2 \leq \delta,$$
where we use $1 - x \leq e^{-x}$. This implies that $\theta_t$ eventually contracts to a ball of radius $\delta$ around $\thetaPS$.

\underline{Case 2:}\quad$\|\theta_t - \thetaPS\|_2\leq \delta$. We show that the RERM iterates can leave a ball of radius $\delta$ around $\thetaPS$ only with negligible probability. We begin by applying the triangle inequality just as we did in the previous case,
\[
W_1(\D^{n_t}(\theta_t),\D(\thetaPS))\leq W_1(\D^{n_t}(\theta_t), \D(\theta_t)) + W_1(\D(\theta_t), \D(\thetaPS)) \leq W_1(\D^{n_t}(\theta_t), \D(\theta_t)) + \epsilon \delta.
\]
For our choice of $n_t$, with probability at least $1 - \frac{6 p}{\pi^2t^2}$ this quantity is upper bounded by
$$W_1(\D^{n_t}(\theta_t), \D(\thetaPS)) \leq 2\epsilon \delta.$$
With this information, we can now apply the exact same steps as in the previous case, but now using the fact that $W_1(\D^{n_t}(\theta_t), \D(\thetaPS)) \leq 2\epsilon \delta$ instead of $W_1(\D^{n_t}(\theta_t), \D(\thetaPS)) \leq 2\epsilon \norm{\theta_t - \thetaPS}$, to conclude that with probability at least $1 - \frac{6p}{\pi^2t^2}$
\[
\norm{\theta_{t+1} - \thetaPS} \leq 2\epsilon \frac{\beta}{\gamma} \delta \leq \delta.
\]
As before, a union bound argument proves that the entire analysis holds with probability $1-p$.

\paragraph{Proof of (b):}

The only difference between part (b) in relation to part (a) is the fact that one needs to invoke the steps of Theorem \ref{theorem:contraction} rather than Theorem \ref{theorem:exact_min_strongly_convex}.

\subsection{Proof of Proposition \ref{propo:existence_stable_points}}

\label{app:propo_existence}

We begin by defining the set-valued function, $g(\theta) = \argmin_{\theta' \in \Theta} \map(\theta, \theta')$. Observe that fixed points of this function correspond to models which are performatively stable. The proof thereby follows from showing that the function $g(\cdot)$ has a fixed point.

 Since the loss is jointly continuous and the set $\Theta$ is compact, we can apply Berge's Maximum Theorem \cite{berge_maximum, aliprantis} to conclude that the function $g(\cdot)$ is upper hemicontinuous with compact and non-empty values. Furthermore,  by convexity of the loss, it follows that in addition to being compact and non-empty, $g(\theta)$ is a convex set for every $\theta \in \Theta$. Therefore, the conditions of Kakutani's Theorem \cite{kakutani} (also see Ch 17. in \cite{aliprantis}) hold and we can conclude that $g(\cdot)$ has a fixed point. Hence, a performatively stable model exists.

\subsection{Proof of Proposition \ref{prop:propoconcave}}

We make a slight modification to Example \ref{example:biasedcointoss} to prove the proposition. As in the example, $\D(\theta)$ is given as follows: $X$ is a single feature supported on $\{\pm 1\}$ and $Y~|~X  \sim \text{Bernoulli}(\frac 1 2 + \mu X + \epsilon \theta X)$, where $\Theta = [0,1]$. We let $\epsilon\geq \frac{1}{2}$, and constrain $\mu$ to satisfy $\abs{\mu + \epsilon} \leq \frac{1}{2}$. We assume that outcomes are predicted according to the model $f_\theta(x) = \theta x + \frac 1 2$ and that performance is measured via the squared loss, $\ell(z;\theta) = (y - f_\theta(x))^2$. This loss has condition number $\frac{\beta}{\gamma}=1$.

A direct calculation demonstrates that the performative risk is a quadratic in $\theta$:
\[
\PR(\theta) = \frac{1}{4} - 2\theta \mu + (1-2\epsilon)\theta^2.
\]
Therefore, if $\epsilon \in \left[\frac{1}{2},1\right)$, the performative risk is a concave function of $\theta$, even though $\epsilon< \frac{\gamma}{\beta}$.

\subsection{Proof of Theorem \ref{theorem:closeness}}

By definition of performative optimality and performative stability we have that:
\[
\map(\thetaPO,\thetaPO) \leq \map(\thetaPS, \thetaPS) \leq \map(\thetaPS, \thetaPO).
\]
We claim that $\map(\thetaPS, \thetaPO) - \map(\thetaPS,\thetaPS) \geq \frac{\gamma}{2}\|\thetaPO-\thetaPS\|_2^2$. By definition of $\map$, we can write
\[
\map(\thetaPS, \thetaPO) - \map(\thetaPS,\thetaPS) = \E_{Z\sim \D(\thetaPS)} \big[ \ell(Z; \thetaPO) - \ell(Z; \thetaPS) \big].
\]
Since $\ell(z; \thetaPO) \geq \ell(z; \thetaPS) + \nabla_\theta \ell(z; \thetaPS)^\top (\thetaPO - \thetaPS) + \frac{\gamma}{2}\|\thetaPO-\thetaPS\|_2^2$ for all $z$, we have that 
\begin{equation}
\label{eq:strong_convex_ineq}
\E_{Z\sim \D(\thetaPS)} \big[ \ell(Z; \thetaPO) - \ell(Z; \thetaPS) \big] \geq \E_{Z\sim \D(\thetaPS)} \big[ \nabla_\theta \ell(Z; \thetaPS)^\top (\thetaPO - \thetaPS) \big] + \frac{\gamma}{2}\|\thetaPO - \thetaPS\|_2^2.
\end{equation}
Now, by Lemma \ref{lemma:first_order_opt_condition}, $\E_{Z\sim \D(\thetaPS)} \big[ \nabla_\theta \ell(Z; \thetaPS)^\top(\thetaPO - \thetaPS) \big] \geq 0$, so we get that equation~(\ref{eq:strong_convex_ineq}) implies 
\[
\E_{Z\sim \D(\thetaPS)} \big[ \ell(Z; \thetaPO) - \ell(Z;\thetaPS) \big] \geq  \frac{\gamma}{2}\|\thetaPO - \thetaPS\|_2^2.
\]
Since the population distributions are $\epsilon$-sensitive and the loss is $L_z$-Lipschitz in $z$, we have that $\map(\thetaPS, \thetaPO) - \map(\thetaPO,\thetaPO) \leq L_z\epsilon\|\thetaPO -\thetaPS\|_2$. If $\epsilon < \frac{\gamma\|\thetaPO-\thetaPS\|_2}{2L_z}$ then we have that $L_z\epsilon\|\thetaPO -\thetaPS\|_2 < \frac{\gamma}{2}\|\thetaPO-\thetaPS\|_2^2$ which is a contradiction since it must hold that 
$$\map(\thetaPS, \thetaPO) - \map(\thetaPO,\thetaPO) \geq \map(\thetaPS, \thetaPO) - \map(\thetaPS,\thetaPS).$$

\subsection{Proof of Corollary \ref{corollary:strategic_classification}}

By Theorem \ref{theorem:exact_min_strongly_convex} we know that repeated risk minimization converges at a linear rate to a performatively stable point $\thetaPS$. Furthermore, by Theorem \ref{theorem:closeness}, this performatively stable point is close in domain to the institution's Stackelberg equilibrium classifier $\thetaSE$,
\[
\norm{\thetaSE - \thetaPS} \leq \frac{2L_z\epsilon}{\gamma}. 
\]
We can then use the fact that the loss is Lipschitz to show that this performatively stable classifier is close in objective value to the Stackelberg equilibrium:
\begin{align*}
\PR(\thetaPS) - \PR(\thetaSE) &\leq \big| \PR(\thetaPS) -  \map(\thetaPS, \thetaSE) \big| + \big|\map(\thetaPS, \thetaSE) -  \PR(\thetaSE) \big| \\
&\leq L_\theta \norm{\thetaSE - \thetaPS} + L_z\epsilon \norm{\thetaSE - \thetaPS}\\
&\leq \frac{2L_z \epsilon (L_\theta + L_z\epsilon)}{\gamma}
\end{align*}
Here, we have used the Kantorovich-Rubinstein Lemma (\ref{lemma:duality}) to bound the second term.

\section{Approximately minimizing performative risk via regularization}
\label{sec:regularization}

Recall that in Proposition \ref{prop:tightness} we have shown that RRM might not converge at all if the objective is smooth and convex, but not strongly convex. In this section, we show how adding a small amount of quadratic regularization to the objective guarantees that RRM will converge to a stable point which approximately minimizes the performative risk on the original loss. 

To do so, we additionally require that the space of model parameters $\Theta$ be bounded with diameter $D =\sup_{\theta,\theta' \in \Theta} \|\theta - \theta'\|_2$. We can assume without loss of generality that $D=1$.

\begin{proposition}
\label{prop:regularization}
Suppose that the loss $\ell(z;\theta)$ is $L_z$-Lipschitz iz $z$ and $L_\theta$-Lipschitz in $\theta$, $\beta$-jointly smooth  \eqref{ass:a2} and convex (but not necessarily strongly convex). Furthermore, suppose that distribution map $\D(\cdot)$ is $\epsilon$-sensitive with $\epsilon < 1$, and that the set $\Theta$ is bounded with diameter 1. Then, there exists a choice of $\alpha$, such that running RRM with loss $\ell^{\reg}(z;\theta) \defeq \ell(z;\theta) + \frac{\alpha}{2}\|\theta - \theta_0\|^2_2$ converges to a performatively stable point $\theta_{\mathrm{PS}}^{\reg}$ which satisfies the following
$$\PR(\theta_{\mathrm{PS}}^{\reg})\leq \min_\theta \PR(\theta) + O\left(\frac{\sqrt{\epsilon}}{1-\epsilon}\right).$$
\end{proposition}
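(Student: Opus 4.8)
The plan is to treat $\ell^{\reg}(z;\theta) = \ell(z;\theta) + \frac{\alpha}{2}\|\theta-\theta_0\|_2^2$ as a new loss to which the already-established strongly convex machinery applies, and then to tune the regularization strength $\alpha$ so as to balance two competing error terms. First I would record the regularity of $\ell^{\reg}$: it is $\alpha$-strongly convex, $(\beta+\alpha)$-jointly smooth, still $L_z$-Lipschitz in $z$ (the regularizer does not involve $z$), and $(L_\theta+\alpha)$-Lipschitz in $\theta$ on the diameter-$1$ set $\Theta$. Since $\D(\cdot)$ is $\epsilon$-sensitive, the hypothesis $\epsilon < \frac{\alpha}{\beta+\alpha}$ of Theorem~\ref{theorem:exact_min_strongly_convex} is equivalent to $\alpha > \frac{\epsilon\beta}{1-\epsilon}$; choosing $\alpha$ above this threshold guarantees that RRM on $\ell^{\reg}$ converges at a linear rate to a unique point $\theta_{\mathrm{PS}}^{\reg}$ that is performatively stable for the regularized decoupled risk $\map^{\reg}(\theta,\theta')\defeq\E_{Z\sim\D(\theta)}\ell^{\reg}(Z;\theta')$.

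Next I would bound $\PR(\theta_{\mathrm{PS}}^{\reg})$ in terms of $\min_\theta\PR(\theta)$ through a short chain of inequalities. Writing $\PR^{\reg}(\theta) = \PR(\theta) + \frac{\alpha}{2}\|\theta-\theta_0\|_2^2$ and letting $\theta_{\mathrm{PO}}^{\reg}$ and $\theta_{\mathrm{PO}}$ be the performative optima of $\ell^{\reg}$ and $\ell$ (which exist since $\Theta$ is compact and both performative risks are Lipschitz, hence continuous, by the argument used for Corollary~\ref{corollary:strategic_classification}), I would argue: (i) $\PR(\theta_{\mathrm{PS}}^{\reg}) \le \PR^{\reg}(\theta_{\mathrm{PS}}^{\reg})$ because the regularizer is nonnegative; (ii) $\PR^{\reg}(\theta_{\mathrm{PS}}^{\reg}) - \PR^{\reg}(\theta_{\mathrm{PO}}^{\reg}) \le (L_\theta + \alpha + L_z\epsilon)\,\|\theta_{\mathrm{PS}}^{\reg} - \theta_{\mathrm{PO}}^{\reg}\|_2$, exactly as in the proof of Corollary~\ref{corollary:strategic_classification} (split via $\map^{\reg}(\theta_{\mathrm{PS}}^{\reg},\theta_{\mathrm{PO}}^{\reg})$, using $(L_\theta+\alpha)$-Lipschitzness in $\theta$ for one piece and the Kantorovich--Rubinstein bound with $\epsilon$-sensitivity for the $L_z$-Lipschitz-in-$z$ piece), combined with Theorem~\ref{theorem:closeness} applied to the $\alpha$-strongly convex loss $\ell^{\reg}$, which gives $\|\theta_{\mathrm{PS}}^{\reg} - \theta_{\mathrm{PO}}^{\reg}\|_2 \le \frac{2L_z\epsilon}{\alpha}$; and (iii) $\PR^{\reg}(\theta_{\mathrm{PO}}^{\reg}) \le \PR^{\reg}(\theta_{\mathrm{PO}}) = \PR(\theta_{\mathrm{PO}}) + \frac{\alpha}{2}\|\theta_{\mathrm{PO}}-\theta_0\|_2^2 \le \min_\theta\PR(\theta) + \frac{\alpha}{2}$, using optimality of $\theta_{\mathrm{PO}}^{\reg}$ and $\mathrm{diam}(\Theta)=1$. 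Concatenating these yields
\[
\PR(\theta_{\mathrm{PS}}^{\reg}) \;\le\; \min_\theta \PR(\theta) \;+\; \frac{\alpha}{2} \;+\; 2L_z\epsilon \;+\; \frac{2L_z\epsilon(L_\theta + L_z\epsilon)}{\alpha}.
\]

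Finally I would optimize the free parameter $\alpha$. The $\alpha$-dependent part is $\frac{\alpha}{2} + \frac{2L_z\epsilon(L_\theta+L_z\epsilon)}{\alpha}$, minimized around $\alpha \asymp \sqrt{\epsilon}$; I would take $\alpha = C_0\,\frac{\sqrt{\epsilon}}{1-\epsilon}$ for a constant $C_0$ depending only on $\beta, L_z, L_\theta$ and chosen large enough that $C_0 \ge \beta$, so that the convergence requirement $\alpha > \frac{\epsilon\beta}{1-\epsilon}$, i.e.\ $C_0\sqrt\epsilon > \epsilon\beta$, holds for every $\epsilon<1$. With this choice each of $\frac{\alpha}{2}$, $2L_z\epsilon$, and $\frac{2L_z\epsilon(L_\theta+L_z\epsilon)}{\alpha}$ is $O\!\left(\frac{\sqrt{\epsilon}}{1-\epsilon}\right)$, since $\epsilon \le \sqrt\epsilon$ and $\frac{1}{1-\epsilon}\ge 1$, which gives the claimed bound. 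The one nontrivial point---the main obstacle---is getting this three-way trade-off right: $\alpha$ must be large enough both to force RRM to converge (Theorem~\ref{theorem:exact_min_strongly_convex} needs $\epsilon < \alpha/(\beta+\alpha)$) and to shrink the stability--optimality gap $2L_z\epsilon/\alpha$, yet small enough that the regularization bias $\alpha/2$ does not dominate; the scaling $\alpha \asymp \sqrt{\epsilon}/(1-\epsilon)$ is precisely what makes all three contributions the same order. Everything else is a direct invocation of Theorems~\ref{theorem:exact_min_strongly_convex} and~\ref{theorem:closeness} together with the Lipschitz/Kantorovich--Rubinstein estimates already used for Corollary~\ref{corollary:strategic_classification}.
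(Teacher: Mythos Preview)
Your proposal is correct and follows essentially the same route as the paper: verify that $\ell^{\reg}$ is $\alpha$-strongly convex and $(\beta+\alpha)$-jointly smooth, invoke Theorem~\ref{theorem:exact_min_strongly_convex} for convergence, then chain the Lipschitz/Kantorovich--Rubinstein estimate with Theorem~\ref{theorem:closeness} and the regularization bias $\alpha/2$, and finally pick $\alpha \asymp \sqrt{\epsilon}/(1-\epsilon)$. The paper fixes $\alpha=\sqrt{\epsilon}\beta/(1-\epsilon)$ and leaves the trivial step $\PR(\theta_{\mathrm{PS}}^{\reg})\le \PR^{\reg}(\theta_{\mathrm{PS}}^{\reg})$ implicit, but otherwise the arguments coincide.
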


We note that in the case where $\epsilon = 0$, the limit point $\theta_{\mathrm{PS}}^{\reg}$ of regularized repeated risk minimization is also performatively optimal.

\begin{proof}

	First, we observe that the regularized loss function $\ell^{\reg}(z;\theta)$ is $\alpha$-strongly convex and $\alpha+\beta$-jointly smooth. Since $\epsilon < 1$, we can then choose an $\alpha$ such that $\epsilon<\frac{\alpha}{\alpha + \beta}$. In particular, we choose $\alpha =\sqrt{\epsilon}\beta / (1-\epsilon)$.
	
	From our choice of $\alpha$, we have that $\epsilon$ is smaller than the inverse condition number. Hence, by Theorem \ref{theorem:exact_min_strongly_convex} repeated risk minimization converges at a linear rate to a performatively stable solution $\theta_{\mathrm{PS}}^{reg}$ of the regularized objective. 
	
	To finish the proof, we show that the objective value at the $\theta_{\mathrm{PS}}^{reg}$ is close to the objective value at the performative optima of the  original objective $\theta_{\mathrm{PO}}$. We do so by bounding their difference using the triangle inequality:
\begin{align*}
\pregloss{\theta_{\mathrm{PS}}^{\reg}}{\theta_{\mathrm{PS}}^{\reg}} - \ploss{\theta_{\mathrm{PO}}}{\theta_{\mathrm{PO}}} &= \pregloss{\theta_{\mathrm{PS}}^{\reg}}{\theta_{\mathrm{PS}}^{\reg}} - \pregloss{\theta_{\mathrm{PO}}^{\reg}}{\theta_{\mathrm{PO}}^{\reg}}\\
&+ \pregloss{\theta_{\mathrm{PO}}^{\reg}}{\theta_{\mathrm{PO}}^{\reg}} - \ploss{\theta_{\mathrm{PO}}}{\theta_{\mathrm{PO}}}
\end{align*}
We can bound the first difference via Lipschitzness:
	\begin{align*}
		\pregloss{\theta_{\mathrm{PS}}^{\reg}}{\theta_{\mathrm{PS}}^{\reg}} - \pregloss{\theta_{\mathrm{PO}}^{\reg}}{\theta_{\mathrm{PO}}^{\reg}} &= \pregloss{\theta_{\mathrm{PS}}^{\reg}}{\theta_{\mathrm{PS}}^{\reg}} - \pregloss{\theta_{\mathrm{PS}}^{\reg}}{\theta_{\mathrm{PO}}^{\reg}}\\
		&+ \pregloss{\theta_{\mathrm{PS}}^{\reg}}{\theta_{\mathrm{PO}}^{\reg}} - \pregloss{\theta_{\mathrm{PO}}^{\reg}}{\theta_{\mathrm{PO}}^{\reg}}\\
		&\leq (L_\theta+\alpha \sup_{\theta,\theta' \in\Theta} \|\theta - \theta'\|_2)\|\theta_{\mathrm{PS}}^{\reg} - \theta_{\mathrm{PO}}^{\reg}\|_2\\
		&+ \epsilon L_z\|\theta_{\mathrm{PS}}^{\reg} - \theta_{\mathrm{PO}}^{\reg}\|_2\\
		&= (L_\theta + \alpha + \epsilon L_z)\|\theta_{\mathrm{PS}}^{\reg} - \theta_{\mathrm{PO}}^{\reg}\|_2\\
		&\leq \frac{2(L_\theta + \alpha + \epsilon L_z) L_z \epsilon}{\alpha}.
	\end{align*}
In the last two lines, we have applied the fact that $D = \sup_{\theta,\theta'\in\Theta} \|\theta - \theta'\|_2 = 1$ as well as Theorem \ref{theorem:closeness}. For the second difference, by definition of performative optimality we have that,
	
	$$\pregloss{\theta_{\mathrm{PO}}^{\reg}}{\theta_{\mathrm{PO}}^{\reg}} \leq \pregloss{\theta_{\mathrm{PO}}}{\theta_{\mathrm{PO}}}  \leq  \ploss{\theta_{\mathrm{PO}}}{\theta_{\mathrm{PO}}} + \frac{\alpha}{2}.$$
Where we have again used the fact that $D=1$ for the last inequality. Combining these two together, we can bound the total difference:
	$$\pregloss{\theta_{\mathrm{PS}}^{\reg}}{\theta_{\mathrm{PS}}^{\reg}} - \ploss{\theta_{\mathrm{PO}}}{\theta_{\mathrm{PO}}} \leq \frac{2(L_\theta + \alpha + \epsilon L_z) L_z \epsilon}{\alpha} + \frac{\alpha}{2}.$$
Plugging in $\alpha = \frac{\sqrt{\epsilon}\beta}{1-\epsilon}$ completes the proof.
\end{proof}

\end{document}